\def\springer{true} 
\definecolor{blue_iit}{RGB}{51,51,255}
\definecolor{Gray}{gray}{0.9}
\newacronym{imu}{IMU}{Inertial Measurement Unit}
\newacronym{rt}{RT}{Real Time}
\newacronym{cop}{CoP}{Center of Pressure}
\newacronym{ls}{LS}{Least Square}
\newacronym{pd}{PD}{Proportional-Derivative}
\newacronym{com}{CoM}{Center of Mass}
\newacronym{grfs}{GRFs}{Ground Reaction Forces}
\newacronym{fk}{FK}{Forward Kinematic}
\newacronym{ik}{IK}{Inverse Kinematic}
\newacronym{ocp}{OCP}{Optimal Control Problem}
\newacronym{nlp}{NLP}{Nonlinear Programming}
\newacronym{lc}{LC}{Landing Controller}
\newacronym{rl}{RL}{Reinforcement Learning}
\newacronym{mpc}{MPC}{Model Predictive Control}
\newacronym{to}{TO}{Trajectory Optimization}
\newacronym{ddp}{DDP}{Differential Dynamic Program}
\newacronym{dofs}{DoFs}{Degrees of Freedom}
\newacronym{ode}{ODE}{Ordinary Differential Equation}
\newacronym{msd}{MSD}{Mass-Spring-Damper}
\newacronym{pwbc}{pWBC}{projection-based Whole Body Control}
\newacronym{wbc}{WBC}{Whole Body Control}
\newacronym{sp}{SP}{Support Polygon}
\newacronym{nn}{NN}{Neural Network}
\newacronym{drl}{Deep-RL}{Deep Reinforcement Learning}
\newacronym{ppo}{PPO}{Proximal Policy Optimization}
\newacronym{td3}{TD3}{Twin Delayed DDPG}
\newacronym{grl}{GRL}{Guided Reinforcement Learning}
\newacronym{e2e}{E2E}{End-to-End Reinforcement Learning}
\newacronym{uarm}{UARM}{Uniformly Accelerated Rectilinear Motion}
\newacronym{cpg}{CPG}{Central Pattern Generator}
\newcommand{\Rnum}{\mathbb{R}} 
\newcommand{\vect}[1]{\mathbf{#1}} 
\newcommand{\mat}[1]{\ensuremath{\begin{bmatrix}#1\end{bmatrix}}}	
\newcommand\BibTeX{{\rmfamily B\kern-.05em \textsc{i\kern-.025em b}\kern-.08em
T\kern-.1667em\lower.7ex\hbox{E}\kern-.125emX}}
\newcounter{definition*}
\newenvironment{definition*}[1][htb]
{\renewcommand{\ALG@name}{Definition}
	\let\c@algocf\c@megaalgorithm
	\begin{algorithm*}[#1]%
	}{\end{algorithm*}}
\newcounter{definition}
\definecolor{sfahmi_blue}{RGB}{0.19,0.51,0.74}
\definecolor{LightBlue}{RGB}{0.4,0.4,1}
\newtheorem{problem}{Problem Statement}
\newtheorem{theorem}{Theorem}[section]
\newtheorem{lemma}[theorem]{Lemma}
\begin{document}
\title{Guided Reinforcement Learning for Omnidirectional  3D Jumping in Quadruped Robots}

\ifdefined\springer
	\author[1,2]{\fnm{Riccardo} \sur{Bussola}}\email{r.bussola@all3.com}
	\equalcont{These authors contributed equally to this work.}
	\author[1,29]{\fnm{Michele} \sur{Focchi}}\email{michele.focchi@unitn.it}
	\equalcont{These authors contributed equally to this work.}
	\author*[2]{\fnm{Giulio} \sur{Turrisi}}\email{giulio.turrisi@iit.it}
	\author[2]{\fnm{Claudio} \sur{Semini}}\email{claudio.semini@iit.it}
	\author[1]{\fnm{Luigi} \sur{Palopoli}}\email{luigi.palopoli@unitn.it}
	\affil*[1]{\orgdiv{Dipartimento di Ingegneria and Scienza dell'Informazione (DISI)}, \orgname{University of Trento}, \orgaddress{\street{Via Sommarive 9}, \city{Trento}, \postcode{38123},   \country{Italy}}}
	\affil[2]{\orgdiv{Dynamic Legged System (DLS)}, \orgname{Istituto Italiano di Tecnologia (IIT)}, \orgaddress{\street{Via San Quirico 19d}, \city{Genova}, \postcode{16163},  \country{Italy}}}
	\raggedbottom
\else
	\author{Riccardo Bussola$^{1,2}$, Michele Focchi$^{1,2}$, Giulio Turrisi$^{2}$, Claudio Semini$^{2}$, Luigi Palopoli$^{1}$
			\thanks{$^1$ The authors are with the Dipartimento di Ingegneria and Scienza dell'Informazione (DISI), University of Trento. Email:  \href{mailto:name.surname@unitn.it}{name.surname@unitn.it}}
			\thanks{$^2$ The authors are with Dynamic Legged System (DLS), Istituto Italiano di Tecnologia (IIT), Email:  \href{mailto:name.surname@iit.it}{name.surname@iit.it}. } }
\fi

\ifdefined\springer
\abstract{Jumping poses a significant challenge for quadruped robots, despite being crucial for many operational scenarios. While optimisation methods exist for controlling such motions, they are often time-consuming and demand extensive knowledge of robot and terrain parameters, making them less robust in real-world scenarios. Reinforcement learning (RL) is emerging as a viable alternative, yet conventional end-to-end approaches lack efficiency in terms of sample complexity, requiring extensive training in simulations, and predictability of the final motion, which makes it difficult to certify the safety of the final motion. To overcome these limitations, this paper introduces a novel guided reinforcement learning approach that leverages physical intuition for efficient and explainable jumping, by combining Bézier curves with a Uniformly Accelerated Rectilinear Motion (UARM) model. Extensive simulation and experimental results clearly demonstrate the advantages of our approach over existing alternatives.
}	
\else
\begin{abstract}
Jumping poses a significant challenge for quadruped robots, despite being crucial for many operational scenarios. While optimisation methods exist for controlling such motions, they are often time-consuming and demand extensive knowledge of robot and terrain parameters, making them less robust in real-world scenarios. Reinforcement learning (RL) is emerging as a viable alternative, yet conventional end-to-end approaches lack efficiency in terms of sample complexity, requiring extensive training in simulations, and predictability of the final motion, which makes it difficult to certify the safety of the final motion. To overcome these limitations, this paper introduces a novel guided reinforcement learning approach that leverages physical intuition for efficient and explainable jumping, by combining Bézier curves with a Uniformly Accelerated Rectilinear Motion (UARM) model. Extensive simulation and experimental results clearly demonstrate the advantages of our approach over existing alternatives.
\end{abstract}
\fi

\ifdefined\springer
	\keywords{Planning of Dynamic Motions, Reinforcement Learning, Legged Robots}
\else
	\begin{IEEEkeywords}
		Planning of Dynamic Motions, Reinforcement Learning, Legged Robots
	\end{IEEEkeywords}
\fi

\maketitle

\textbf{Note:} A supplementary video can be found at:
\href{https://youtu.be/aSX_AapESPY?si=dFkFSgbSwFYV5Bl9}{\texttt{www.youtube.com/watch?v=aSX\_AapESPY}}

The code associated with this work can be found at:
\href{https://github.com/mfocchi/guided-rl-jump.git}{https://github.com/mfocchi/guided-rl-jump.git}.

\section{Introduction}\label{sec:introduction}

Quadruped robots are increasingly popular for their ability to move in
very difficult terrains~\cite{amatucci2024vero}, often characterised
by irregular slopes, gaps, and obstacles. This unique ability opens up
a full range of possible missions, which are difficult or impossible
for conventional robotic solutions (e.g., wheeled mobile robots) to
execute. The flexibility of legged robots comes with a price: the
complexity of their dynamic structure demands sophisticated solutions
for motion control.

While controlling legged locomotion~\cite{quadruped-survey} is a
relatively well-explored problem for walking and running
manoeuvres~\cite{tamols22}, the difficulty of the problems grows
significantly when the robot has to execute jumps longer than its legs
size. This type of manoeuvre is often required in scenarios such as the
exploration of hostile environments or for search and rescue in
post-disaster management, where quadruped robots have a
significant advantage over other platforms.

Unlike walking or running, which are based on continuous interaction
between the feet and the ground and offer ample opportunities for frequent
adjustments, jumping requires very precise control during the thrust
phase to achieve the desired trajectory, since the robot cannot
actively modify its trajectory during the subsequent flight phase. The
final phase, landing, demands stability and the ability to absorb
impact forces averting damages to the robot's structure and preventing
tipping over \cite{roscia2023reactive}.

From a control perspective, the key challenges of jumping can be summarised in the following points:
\begin{enumerate}
    \item \textbf{Optimisation of the thrust trajectory:} the desired lift-off state has to be reached while meeting dynamic constraints (e.g., joint position and velocity limits, and torque limitations).
    \item \textbf{High-precision:} When the robot is airborne, its flight is governed solely by the conservation of momentum, as the robot is acted upon only by external forces. Therefore, the final landing point and configuration can deviate significantly in the case of even small trajectory tracking errors during the thrust phase.
    \item \textbf{Strong real-time constraints:} The controller has to be executed in real-time as inputs need to be generated continuously and frequently during the entire jump, which is of limited duration.
\end{enumerate}

These requirements, coupled with the complex nonlinear dynamics of the quadruped robots and the unmodelled effects of the interaction between feet and the ground, make control design a very difficult task.

\subsection{Related work}

The challenging problem of quadruped jumping has attracted a significant attention
in the past few years, with many remarkable proposals in the literature.


A first group of researchers advocate the use of optimisation methods. A common solution to
manage the complexity of the task is to employ 
reduced models,
such as a 5-\gls{dofs} representation of the quadruped's side view
within the sagittal plane. This idea is often complemented by heuristic approaches
that utilise physical intuition in the design of controllers or
planners \cite{park17, roy20}. While these hand-crafted motion strategies
are grounded in physical reasoning, they often lack guarantees of
physical feasibility. Another common approach involves whole-body
numerical optimisation \cite{nguyen2021contact}. This method has led to
spectacular results, such as the complex aerial manoeuvres performed by the MIT Mini
Cheetah \cite{katz2019mini}. These manoeuvres include jumps, spins, flips, and barrel rolls,
executed through a
centroidal momentum-based nonlinear
optimisation \cite{chignoli2021online, garcia21, chignoli22}.

Despite the significance of these achievements, optimisation approaches
are subject to well-known limitations. First, solving the
nonlinear and nonconvex problem entails
a very high computational cost, making this family of solutions
difficult to deploy in real-time applications. Second, these approaches
require detailed system models, which are challenging to construct
on-the-fly in unstructured environments. The difficulty of accurately
capturing dynamic contact timing, combined with the sparse nature of the control
task (the controller operates only during the thrust phase), adds much to the complexity of the problem.
Given these challenges, the application of traditional optimisation
techniques \cite{chignoli2021online, garcia21, chignoli22} based on
offline planning proves to be impractical, particularly for reactive
approaches that demand frequent online replanning.
An interesting way of addressing at least part of these problems can be
found in the work of Song et al. \cite{song2022}, who propose an
optimal motion planner that generates a variety of energy-optimal
jumps. This is achieved by leveraging a library of pre-planned
heuristic motions, which are automatically selected based on
user-specified parameters or perception data. A different approach is
the one proposed by Li et al. \cite{cafe_mpc}, who achieve barrel
rolls using an \gls{mpc} approach. This method gradually relaxes the
planning problem along the prediction horizon—reducing model accuracy,
coarsening time steps, and loosening constraints—to improve
computational efficiency and performance. Ding et
al. \cite{ding2024robust} leverage a SLIP template model to create 3D
jumping motions.

In summary, in order to make \gls{mpc} and optimisation-based
methods computationally affordable, researchers frequently have to
compromise on optimality, imposing artificial
constraints~\cite{mastalli22, nguyen2021contact}, such as pre-fixed
contact sequences, prefixed flight-phase duration, or offline-optimised timing
parameters. The price to pay can be high in terms of robustness:
pre-specified foot contacts are known to create stability problems
for a large mismatch between the expected and actual contact
points \cite{li2024cafempc}. The problem is being
investigated. Bellegarda et al. \cite{bellegarda2024frog} take
inspiration from jumping frogs to reduce the number of parameters to a
small set which can be optimised online to avoid the sim-to-real
gap.

\begin{figure*}[tp!]
    \centering{
    \includegraphics[width=0.98\textwidth]{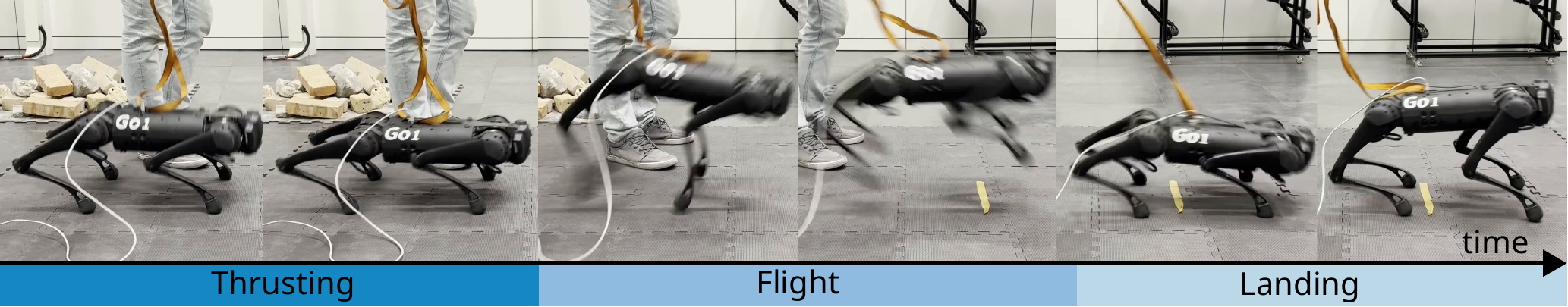}
    }
    \caption{Snapshots of Unitree Go1 robot executing the three stages of a jump: Trusting, Flight, and Landing phases.}
    \label{fig:jump_tax}
\end{figure*}

Advances in computational power, time-continuous action algorithms,
and highly parallelised simulators have spurred interest in
applying \gls{rl} to robot locomotion tasks. The pioneering work by
Lillicrap et al. \cite{lillicrap2015ContinuousCW} demonstrated that
actor-critic mechanisms combined with deep-Q networks can successfully
learn policies in continuous action domains. Building on this
fundamental result, \gls{rl} has been applied to various quadruped
locomotion tasks \cite{gehring2016practice, hwangbo2019learning,
peng2020, ji2022concurrent, rudin2021}, including some early attempts
at in-place hopping with
legs \cite{fankhauser2013reinforcement}. Innovative parkour
strategies, such as those proposed in \cite{hoeller2024anymal}, aim to
extend the application of~\gls{rl} towards more dynamic motion
patterns using purely~\gls{e2e} approaches.

\gls{e2e} \gls{rl} methods hold the promise to enable robotic
jumping, but they typically require millions of training steps to
converge on robust policies \cite{openAIbenchmark}. The primary
difficulty stems from the sparse and non-smooth nature of the
rewards. For jumping tasks, rewards are typically only provided after
landing, and abrupt contact transitions create sharp discontinuities
in the reward signal \cite{e2e_jumping, majid2021DeepRL}. The
resulting learning process can turn out to be unstable and
sample-inefficient. Additionally, \gls{rl} policies often lack
interpretability, complicating safety validation and making it harder
to ensure physical feasibility.

Several approaches address the issue of efficiency and robustness
in \gls{rl} training. For instance, combining trajectory
optimisation (TO) with \gls{rl} has been explored as a way to
bootstrap policy learning and overcome issues of local
optimality \cite{bogdanovic2022model, bellegarda2023robust,
grandesso2023cacto}. In these methods, \gls{rl} is used to generate
initial trajectories, thereby bootstrapping TO's exploration process,
avoiding local optima, and improving convergence. 

The action space
design is another crucial factor in \gls{rl} efficiency and
robustness \cite{vanderpanne2017, bellegarda19}. Some strategies
consider using position or torque references \cite{chen2023learning}
in joint \cite{aractingi2023controlling} or Cartesian space to enhance
learning. Shaffie et al. \cite{shafiee2024manyquadrupeds} further
generalised \gls{rl} policies across different quadruped
morphologies by formulating both actions and observations in task
space.
%

In order to achieve  effective RL-based jumping strategies, Yang et
al. \cite{yang2023continuous} warm-start the policy by learning
residual actions on top of a controller. This technique smooths the
reward signal and improves convergence, but is limited to
fixed-duration jumps. Vezzi et al. \cite{vezzi2024two} employ a
multi-stage learning approach, focusing on maximising jump height and
distance, though with limited consideration for landing accuracy.
Atanassov et al. \cite{atanassov2024curriculumbased} achieved
remarkable results with a complex multi-stage training process and
heavily engineered reward shaping. Although their work represents the
state-of-the-art in \gls{rl}-based jumping, the complexity of the
approach may limit its general applicability. For this reason, we will
use their work as a baseline to compare with the method proposed in
this paper.

A comprehensive overview by Esser et
al. \cite{esser2022guided} suggests that incorporating task-specific
knowledge (known as \gls{grl}) is the most effective way to address
the challenges posed by \gls{e2e} in complex tasks.  Along this
line, \cite{shafiee2024manyquadrupeds} injects some domain knowledge
into the training process, demonstrating that locomotion can be achieved
by modulating an underlying \gls{cpg}-based motor-control scheme.   In
conclusion, while optimisation-based and \gls{rl} methods have made
significant progress in quadruped locomotion, there is still a clear
gap in effective solutions for the quadruped jumping task,
particularly in non-flat environments.

\subsection{Contribution}
This work proposes a \gls{grl}-based strategy for precise \textbf{3D
omnidirectional quadruped jumping}. It is designed to plan the entire
jumping action from standstill to elevated (or depressed) terrain in
a few milliseconds, thereby enabling \textbf{real-time
applications}. This is achieved by parametrising the jump trajectory
in Cartesian space. This way we simplify the problem by modelling the jumping
task as a \textbf{single action}, which addresses the sparsity
inherent in conventional \gls{rl} approaches.

The core idea is to exploit physical intuitions to model the jump
trajectory. Our previous work on monopod
jumping \cite{bussola2024efficient} provided valuable insights into
using \gls{grl} to address the sparsity and complexity of linear
jumping, forming the foundation for our quadruped jumping
approach. That work employed Bezier curves to parametrise the
monopod's \gls{com} motion, concentrating learning within a small set
of parameters. In this study, we build upon these results for the more
complex quadruped scenario, introducing control over both the
translational and angular dynamics. The latter was not included in
the preparatory work on monopods. 

Our simulation and experimental results show that this approach not
only enhances accuracy and secures quick, robust convergence of the
learning process, but also instils a high degree of reliability into
the \gls{rl} policy, ensuring safer and more dependable actions. For instance, given
the lift-off configuration, we can utilise ballistic equations to make
tight predictions on the landing position and timing.
Unlike other methods requiring multi-stage learning or preliminary
goal-specific tasks, ours is a \textit{single-stage} learning process
that directly achieves the final goal without intermediate training
steps. This simplification improves sample efficiency and
significantly impacts training time.

From a functional perspective, the integration of orientation and
angular velocity allows us to exploit the robot's full dynamic
capabilities, optimising its orientation at take-off; 
additionally, controlling
angular velocity enables the robot to adjust its pitch mid-air,
reducing impact forces upon landing. Finally, this approach
facilitates novel jump manoeuvres, such as in-place twist jumps, where
the robot can rotate its body mid-air without linear displacement,
expanding its repertoire of dynamic actions.

From an implementation standpoint, the method computes \gls{com}
references, which are then mapped to joint positions
using \gls{ik}. Planning the action in Cartesian space also renders
the system adaptable to various quadruped morphologies (or even other
legged robot morphologies, e.g., humanoids). The joint references are
tracked by a low-level controller, leveraging the robustness of
classical control methods to handle uncertainties in friction, masses and
inertias. This alleviates issues arising from imprecise parameter
knowledge, removing the need for extensive domain randomisation during
training and making our method more robust and immune to overfitting.
To summarise, the main contributions of our work are:
\begin{itemize}
    \item A novel, sample-efficient \gls{rl} method designed for quadruped robot omnidirectional jumping, which leverages the concept of \textbf{Guided Reinforcement Learning} to inject system knowledge into the task formulation.
    \item A formulation of an action-space for jumping tasks, which is: 1) robot-agnostic; 2) provides the possibility to predict the outcome of the desired robot jump motion, enhancing the predictability of \gls{rl} in such scenarios; and 3) can be used as an effective heuristic for predicting the jump apex and touch-down moments.
    \item A large set of simulation and experiments on two different quadrupeds (Aliengo and Go1) validating the method's efficacy and repeatibility. The learned policy is transferred 
    from simulation to real robots, and its robustness is evaluated in simulation under varying damping and mass conditions. Compared to \cite{atanassov2024curriculumbased}, our approach achieves a lower standard deviation in landing error and higher sample efficiency. For both simulation and real experiments, we also provide a statistical analysis of the accuracy.
\end{itemize}
%
\subsection{Outline}
In Section \ref{sec:guided_rl} we illustrate an overview of the proposed method, in Section \ref{sec:traj_parametrization}
we define the adopted parametrization of the  thrust trajectory, we illustrate in Section \ref{sec:rl} the Learning  model used in our \gls{grl} approach. In Section \ref{sec:results} we report simulation and experimental results of omnidirectional jumps achieved with the Aliengo and Go1 robots, finally, in Section \ref{sec:conclusion} we draw the conclusions.

\nomenclature{$s$}{State of the system, representing the environment at a specific point in time}
\nomenclature{$a$}{Action taken by the agent, corresponding to a control input}
\nomenclature{$r$}{Reward signal, quantifying how well the agent's action aligns with the task's objectives}
\nomenclature{$\mathbf{c}$}{Position of the Center of Mass (COM) of the robot}
\nomenclature{$\mathbf{\Phi}$}{Orientation of the trunk, represented by Euler angles}
\nomenclature{$\mathbf{\dot{c}}$}{Linear velocity of the Center of Mass (COM)}
\nomenclature{$\mathbf{\dot{\Phi}}$}{Angular velocity of the trunk}
\nomenclature{$T_{th}$}{Total thrust time during the jump}
\nomenclature{$T_{th_{b}}$}{Thrust time for the Bézier position trajectory phase}
\nomenclature{$T_{th_{e}}$}{Thrust time for the UARM trajectory phase}
\nomenclature{$\mathbf{P}$}{Control points for the Bézier position trajectory}
\nomenclature{$\mathbf{Q}$}{Control points for the Bézier orientation trajectory}
\nomenclature{$\mathbf{c}_{lo_{b}}$}{Bézier lift-off position of the robot's Center of Mass}
\nomenclature{$\mathbf{\dot{c}}_{lo_{b}}$}{Bézier lift-off linear velocity of the robot's Center of Mass}
\nomenclature{$\mathbf{c}_{lo_{e}}$}{Explosive lift-off position of the robot's Center of Mass (UARM phase)}
\nomenclature{$\mathbf{\dot{c}}_{lo_{e}}$}{Explosive lift-off linear velocity of the robot's Center of Mass (UARM phase)}
\nomenclature{$\mathbf{\Phi}_{lo}$}{Lift-off orientation of the trunk}
\nomenclature{$\mathbf{\dot{\Phi}}_{lo}$}{Lift-off angular velocity of the trunk}
\nomenclature{$k$}{Velocity multiplier in the UARM trajectory parametrization}
\nomenclature{$d$}{Displacement in the UARM trajectory phase}
\nomenclature{$\mathbf{q}$}{Joint configuration}
\nomenclature{$\mathbf{\dot{q}}$}{Joint velocities}
\nomenclature{$\boldsymbol{\tau}$}{Joint torques}
\nomenclature{$\mathbf{F}$}{Contact forces at the feet}
\nomenclature{$\mathbf{J}$}{Jacobian matrix}
\nomenclature{$\mathbf{K}_{p}$}{Proportional gain in the control law}
\nomenclature{$\mathbf{K}_{d}$}{Derivative gain in the control law}
\nomenclature{$m$}{Mass of the robot}
\nomenclature{$W$}{World frame}
\nomenclature{$B$}{Base frame}

\section{Guided Reinforcement Learning}
\label{sec:guided_rl}
\subsection{Jump Taxonomy}
In this work, we have chosen to adopt a jumping technique known as pronking (or stotting), 
where all four legs push simultaneously on the ground, as seen in some mammals
\footnote{ The other jumping strategy is the leaping technique, in which the animal initially 
pushes the ground with all four legs. As it propels forward, it pitches its body up so that only the hind legs remain in contact
with the ground, and the final push is executed by these remaining two legs.}.
This creates a single continuous thrust phase, enabling straightforward parametrization.
Biomechanically, it consists of three phases: thrust, flight, and landing (see Fig. \ref{fig:jump_tax}). 
\textbf{Thrust Phase:}
it involves an initial compression followed by explosive decompression, akin to a spring release. An effective timing and an efficient energy management are crucial for optimising acceleration, flight duration, and landing. A proper force distribution prevents imbalances and ensures safe execution, considering terrain constraints such as friction. 
 
\textbf{Flight Phase}
Once airborne, the   motion follows ballistic principles and is governed solely by gravity. Although thrust cannot be adjusted mid-flight, minor leg repositioning aids obstacle avoidance. Additionally, retracting legs mid-flight and extending them before touchdown optimises impact absorption, as explained later.
During the flight phase, no additional thrust is generated, making the jump trajectory entirely dependent on momentum generated at lift-off. 

\textbf{Landing Phase}
In this phase, ground contact is re-established while mitigating impact forces. The leg positioning from the flight phase is key to absorbing shocks and maintaining stability. A compliant landing strategy is beneficial to dissipate kinetic energy, preventing tipping and structural stress, and ensuring a smooth transition back to standing. To achieve safe landing, Roscia et al. \cite{roscia2023reactive} developed an optimisation-based \gls{lc} using a Variable Height Springy Inverted Pendulum (VHSIP) model to manage flight and landing phases. The \gls{lc} adjusts foot positions based on horizontal velocity during flight, ensuring stable landings without bouncing and slippage. While an \gls{lc} significantly affects jump performance, we deliberately omitted it during both training and experiments to focus on the thrust phase—the most critical stage, as it determines the force needed to reach the target and ultimately dictates the jump's success.

\subsection{Method Overview}

Since the landing position depends solely on the \textit{lift-off} configuration,  selecting this configuration appropriately is crucial for reaching the desired target. In this section, we analyse how the lift-off state influences the landing position, emphasising the inherent complexity of jump planning.

To simplify the problem, the robot flight phase is modeled as the projectile motion of a rigid body influenced only by gravity. 
In the absence of other external forces, this \textit{ballistic} trajectory lies within the plane defined by the lift-off point and the target location. Consequently, planning a jump reduces to finding the inverse relationship 
between the desired target location $c_{tg}$ and the corresponding lift-off configuration—comprising position $c_{lo}$ and velocity $\dot{c}_{lo}$. Since the trajectory remains constrained to the plane tangential to both the initial and target positions, all feasible lift-off \gls{com} positions and velocity vectors that result in the same target $c_{tg}$, lie on this planar manifold. 
Any lift-off velocity with a component outside this plane would, due to momentum conservation during the flight, 
lead to a different landing point.

The projectile motion of the system can be described by the following 
set of equations: 
\begin{equation}
\left\{
\begin{aligned}
     \mathbf{c}_{tg,x} &= \mathbf{c}_{lo,x} + \mathbf{\dot{c}}_{lo,x} T_{fl} \\
     \mathbf{c}_{tg,y} &= \mathbf{c}_{lo,y} + \mathbf{\dot{c}}_{lo,y} T_{fl} \\
     \mathbf{c}_{tg,z} &= \mathbf{c}_{lo,z} + \mathbf{\dot{c}}_{lo,z} T_{fl} - \frac{1}{2} g T^{2}_{fl}
     \label{eq:ballistic_eq}
\end{aligned}
\right.
\end{equation}
where $T_{fl}$ represents the duration of the flight phase, and $g$ denotes the acceleration due to gravity.
For the sake of simplicity, in this description we consider the execution of a forward jump, neglecting orientation.
\begin{figure}[b]
\centering
    \includegraphics[width=0.8\columnwidth]{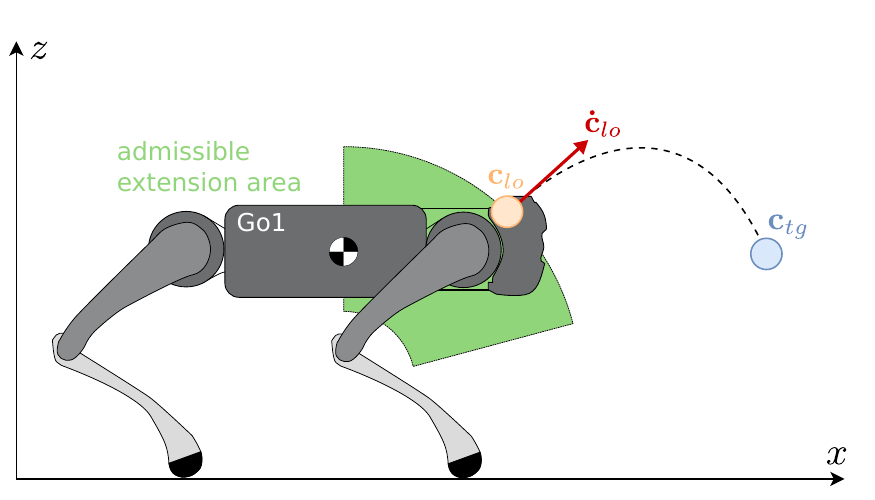}
    \caption{Side-view of the ballistic problem.}
    \label{fig:ballistic-problem}
\end{figure}
In view of the property just described, 
we can analyse the ballistic motion within a two-dimensional side-view space (see Fig. \ref{fig:ballistic-problem}), involving
the z-axis and x-axis (first and third equation in \eqref{eq:ballistic_eq}). 
To investigate the impact of the 
lift-off linear velocity on the jump outcome, given a desired target position,  we fix the lift-off position ($c_{lo,x}, c_{lo,z}$), and work out the  flight time from the first equation of \eqref{eq:ballistic_eq} and replace in the third. 
Then,  by simple algebraic manipulations,  the vertical component 
of the lift-off velocity $\dot{c}_{lo,z}$ can be expressed as 
a  function of the horizontal one $\dot{c}_{lo,x}$

\begin{equation}
    \vect{\dot{c}}_{lo,z}=\frac{\vect{c}_{lo,z}-\vect{c}_{tg,z}}{\vect{c}_{tg,x}-\vect{c}_{lo,x}}\vect{\dot{c}}_{lo,x}+\frac{\vect{c}_{tg,x}-\vect{c}_{lo,x}}{2}g\frac{1}{\vect{\dot{c}}_{lo,x}}
\label{eq:balistic_problem}
\end{equation}

This hyperbolic relation tells us that an infinite set of solution pairs can be chosen: for instance,  it is possible to jump with 
smaller horizontal velocity staying longer in the air (i.e. higher vertical velocity) or the opposite. 
Flight time decreases as horizontal lift-off velocity increases since a lower horizontal component requires greater vertical thrust, prolonging flight. Optimizing for minimal flight time requires maximising horizontal velocity; however, not all solutions are compatible with the robot's  physical limits, such as joint position, velocity, and torque limits, and with the constraints posed by the environment. 
Given these considerations, we can model the problem of generating the thrust phase in the following terms:
\begin{problem}
  \label{prob:form}
    Synthesize a thrust phase that produces a lift-off state (i.e. \gls{com} position and velocity) that: 1. satisfies~\eqref{eq:ballistic_eq},
    2. copes with the potentially adverse conditions posed by the environment (i.e. contact stability, friction constraints), 
    3. satisfies the physical and actuation constraints.
\end{problem}


Including lift-off position as a decision variable adds complexity, 
making the problem highly non-convex and requiring Nonlinear Programming (NLP) solvers.
This problem is typically addressed alongside trajectory planning during the thrust phase, but its complexity often requires artificial constraints like fixed thrust or flight times to reduce computational load. NLP solvers, while useful, are sensitive to initialization and prone to local minima, making them unsuitable for real-time jumping tasks. Instead, \gls{grl} offers an efficient alternative solution by treating the lift-off configuration as the primary action of our policy, as it is the crucial factor determining the overall jump outcome. 
We employ reinforcement learning to generate optimal thrust trajectories, which are executed by a low-level tracking controller. Since the trajectory can be parameterized and computed in advance, it enables fast execution through a lightweight neural network. During the jump, the low-level controller continuously tracks the planned motion, ensuring robustness and closed-loop stability.

Our approach is built on three key ideas.
 First, learning is conducted in Cartesian space, this design choice improves generalisation across different robot morphologies and configurations.
Second, because the system follows ballistic dynamics during the airborne phase, the final landing position can be predicted \textit{deterministically}.  This allows the learning process to focus exclusively on the thrust phase—the moment when the robot propels itself off the ground and  enables the integration of a  \textit{safety filter} (see Section \ref{sec:physical_check}) that eliminates infeasible actions through simple computations.
Third, inspired by biological systems, we guide the learning with prior knowledge about what a plausible motion should look like. 

In mammals, movement patterns are shaped by evolutionary priors; similarly, we embed physical intuition into the learning process. A jump begins with a “charging” motion, during which the legs compress, followed by an explosive extension that accelerates the \gls{com} upward and forward, generating momentum for lift-off. This charging motion aims to exploit the full joint range to produce acceleration.

Our key idea is to use a parametric curve to capture this motion, thereby enhancing and stabilising the learning process. Among the class of parametric curves suitable for this purpose, Bézier curves are particularly convenient due to their simple and efficient definition and computation. Notably, these curves offer two key advantages: their derivatives have simple expressions, and the curve lies entirely within the convex hull of its control points.

Using this representation, we apply an on-policy RL algorithm, \gls{ppo}~\cite{ppo}, to learn the Bézier parameters (actions) by minimising cost functions similar to those used in optimal control. 

Figure \ref{fig:framework} illustrates our \gls{rl} pipeline. The agent operates in 
either training or inference mode. During training, the Critic and Actor 
neural networks are periodically updated (as indicated by the dashed lines in the figure)
once enough data has been gathered.
\begin{figure}[tbp]
\centering
\includegraphics[width=1.0\columnwidth]{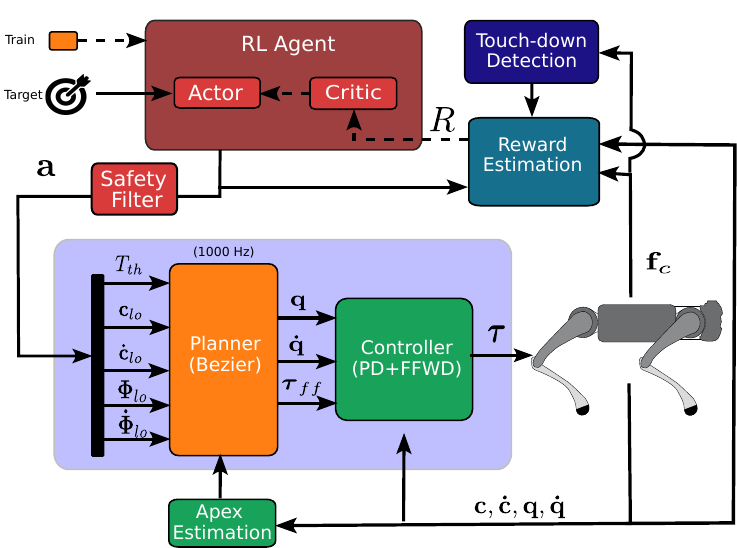}
\caption{\small{Diagram of the  \gls{grl} Framework. 
The framework is split into two levels: 
the \gls{rl} agent and the planner. The \gls{rl} agent produces an action for the planner 
based on a desired target. This computes a B\'ezier reference curve that is 
mapped into joint motion via inverse kinematics and tracked by 
the \gls{pd} controller that provides the joint torques to feed the robot.
During the training, at the end of each episode, a 
reward is computed and fed back to the \gls{rl} agent. 
Dashed lines are active when the framework is in training mode. }}
\label{fig:framework}
\end{figure}
\subsection{Control}
To track a given \gls{com} trajectory, we employ a kinematics-based approach
combined with gravity compensation.  
\gls{ik} is employed to compute the joint configurations and velocities that correspond to the given  \gls{com}  trajectory.
The computed joint positions $ \mathbf{q}^{d}\in \Rnum^n $ and velocities $ \mathbf{\dot{q}}^{d}\in \Rnum^n $ are then passed to a 
\gls{pd} controller, along with a feedforward torque $ \boldsymbol{\tau}_{ff}\in \Rnum^n$ derived from gravity compensation.

\begin{equation}
\begin{aligned}
    \boldsymbol{\tau} &= \mathbf{K}_{p}(\mathbf{q}^{d} - \mathbf{q}) + \mathbf{K}_{d}(\mathbf{\dot{q}}^{d} - \mathbf{\dot{q}}) + \boldsymbol{\tau_{ff}} \\
    \boldsymbol{\tau}_{ff} &= -\mathbf{J}_{cj}^{T}\underbrace{(\mathbf{J}_{cb}^{T})^{\dagger} \mathbf{w}^{g}}_{\mathbf{f}_c}
\end{aligned}
\end{equation}
where $-\mathbf{J}_{cj}^{T}(\mathbf{J}_{cb}^{T})^{\dagger}$ is the mapping of the gravity wrench $\mathbf{w}^{g} \in \Rnum^6$ into joint torques and $\vect{f}_c \in \Rnum^{3c}$ is the correspondent vector of feet forces, $n=12$ is the number of actuated \gls{dofs} and $c=4$ the number of contacts. We made the choice of using a sub-optimal simple control scheme instead 
of more elaborate ones to show the capability of the learning framework to compensate for controller inaccuracies.  

\section{Trajectory parametrization}
\label{sec:traj_parametrization}
\subsection{Bezier Curves}
B\'ezier curves are defined by a set of control points $\mathbf{P}$ 
and constructed using Bernstein basis polynomials. The general formulation of a Bézier curve of degree $n$ is as follows:
\begin{equation}
\label{eq:bez}
    \begin{aligned}
        \mathbf{B}(t) &= \sum_{i=0}^{n}\binom{n}{i}t^{i}(1-t)^{n-i}\mathbf{P}_{i} \;\;\;\; 0 \leq t \leq 1\\
        &= \sum_{i=0}^{n}\frac{n!}{i!(n-i)!}t^{i}(1-t)^{n-i}\mathbf{P}_{i} \\
        &= \sum_{i=0}^{n}b_{i}^n(t)\mathbf{P}_{i}
    \end{aligned}
\end{equation}
where $n$ is the order of the curve, $b_{i}^n$ is the Bernstein basis polynomial of degree $n$ and $\mathbf{P}_{i}\in \Rnum^3$ is the $i^{th}$ control point. For a Bézier Curve of order $n$ the number $\#{\mathbf{P}}$ of control points  is equal to
\begin{equation}
    \#{\mathbf{P}}= n+1
\end{equation}
One of the most valuable properties of the Bézier curve is its differentiability, with its derivative also being a Bézier curve of order $n - 1$ (see Appendix \ref{sec:bez_der} for details on the derivation) 
\begin{equation}
    \label{eq:bez-der}
    \mathbf{\dot{B}}(t) = \sum_{i=0}^{n-1}b_{i}^{n-1}(t)\mathbf{P^{\prime}}_{i} \;\;\;\; 0 \leq t \leq 1
\end{equation}
with control points defined as
\begin{equation}
    \mathbf{P^{\prime}}_{i} = n\left(\mathbf{P}_{i+1}-\mathbf{P}_{i}\right)
\end{equation}
The curve is originally defined over a normalized time interval $0 \leq t \leq 1$ but the formulation can be defined over arbitrary time intervals  $ [0, T_{th}] $
with the following generalized form
\begin{equation}
    \mathbf{B}(t) = \sum_{i=0}^{n}b_{i}^n\left(\frac{t}{T_{th}}\right)\mathbf{P}_{i}  \;\;\;\; 0 \leq t \leq T_{th}
\end{equation}
It follows that, once the control points of a B\'ezier curve of order $n$ are known, the control points of
its derivative curve can be directly obtained
\begin{equation}
    \begin{aligned}
        \mathbf{P^{\prime}}_{i} &= \frac{n}{T_{th}}\left(\mathbf{P}_{i+1}-\mathbf{P}_{i}\right) \\
        \mathbf{\dot{B}}(t) &= \sum_{i=0}^{n-1}b_{i}^{n-1}\left(\frac{t}{T_{th}}\right) \mathbf{P^{\prime}}_{i} \;\;\;\; 0 \leq t \leq T_{th}
    \end{aligned}
\end{equation}
When selecting the order of the Bézier curve for our problem, we must consider the need to model both position and velocity. For any Bézier curve, the first and last control points correspond to the initial and final values of the curve. A third-order Bézier curve offers an ideal balance for modeling the position trajectory, as it provides four control points. The two intermediate control points can be used to shape the velocity profile, since all the four  control points are later used to compute the derivative, and thereby the velocity trajectory. For the explicit form of  the cubic B\'ezier curve and its derivative see Appendix \ref{sec:explicit}.
\subsection{Jumping Strategy for quadrupeds}
This work builds upon our previous work for a monopod \cite{bussola2024efficient} by including the angular dynamics, 
which has an impact on the distribution of contact forces, balance, and slippage,  at the same time 
enabling dynamic maneuvers such as in-place twists for rapid reorientation.
%

The  robot configuration is represented by the pair of vectors $(\mathbf{s}, \mathbf{\dot{s}})$, where $\mathbf{s}$ includes the \gls{com} position $\mathbf{c}$ and trunk orientation $\boldsymbol{\Phi}$ (expressed in ZYX Euler angles\footnote{It is reasonable to use Euler angle parametrization for the orientation during the thrust phase, despite the well known gimbal lock singularity issue at $\theta=\pi/2$ because we are not expecting such big orientation changes.}), and $\mathbf{\dot{s}}$ includes the \gls{com} linear velocity $\mathbf{\dot{c}}$ and trunk Euler Rates $\boldsymbol{\dot{\Phi}}$. 
Additionally, we define the robot target configuration $\mathbf{s}_{tg}$ 
as the vector containing the desired \gls{com}  position 
$\mathbf{c}_{tg}$ and trunk orientation $\boldsymbol{\Phi}_{tg}$ at \textit{landing}.
Starting from the explicit forms \eqref{eq:expl-bez} of both the cubic Bézier curve  and its quadratic derivative, we expressed the fulfillment of the boundary conditions as a linear system \eqref{eq:bezier-control-points}. Once solved, this system provides the control points of the thrust trajectory which links the initial state $(\mathbf{s}_{0}, \mathbf{\dot{s}}_{0})$, to the lift-off state $(\mathbf{s}_{lo}, \mathbf{\dot{s}}_{lo})$:
%
%
\begin{equation}
\resizebox{\columnwidth}{!}{
    $
    \begin{cases}
    \begin{array}{l}
    \mathbf{P}_{0} = \mathbf{c}_{0} \\
    \mathbf{P}_{3} = \mathbf{c}_{lo} \\
    \mathbf{P'}_{0} = \frac{3}{T_{th}}(\mathbf{P}_{1}-\mathbf{P}_{0}) = \dot{\mathbf{c}}_{0} \\
    \mathbf{P'}_{1} = \frac{3}{T_{th}}(\mathbf{P}_{2}-\mathbf{P}_{1}) \\
    \mathbf{P'}_{2} = \frac{3}{T_{th}}(\mathbf{P}_{3}-\mathbf{P}_{2}) = \dot{\mathbf{c}}_{lo}
    \end{array}
    \end{cases}
    \begin{cases}
    \begin{array}{l}
    \mathbf{Q}_{0} = \mathbf{\Phi}_{0} \\
    \mathbf{Q}_{3} = \mathbf{\Phi}_{lo} \\
    \mathbf{Q'}_{0} = \frac{3}{T_{th}}(\mathbf{Q}_{1}-\mathbf{Q}_{0}) = \dot{\mathbf{\Phi}}_{0} \\
    \mathbf{Q'}_{1} = \frac{3}{T_{th}}(\mathbf{Q}_{2}-\mathbf{Q}_{1}) \\
    \mathbf{Q'}_{2} = \frac{3}{T_{th}}(\mathbf{Q}_{3}-\mathbf{Q}_{2}) = \dot{\mathbf{\Phi}}_{lo}
    \end{array}
    \end{cases}
   $
   \label{eq:bezier-control-points} }%
\end{equation}
The only shared variable between the orientation and position trajectories is the thrusting time $T_{th}$, due to the duration constraints that both parameterizations must respect. Beyond this, the two trajectories remain completely independent. 
Including an angular component enables regulation of the angular velocity at lift-off, making it possible to achieve landing postures that are more optimal in terms of \gls{grfs} distribution.
In Fig. \ref{fig:bez-orientation}, the 3D visualization of a thrust trajectory for a simple forward jump is shown, both for the pure position parameterization and for the combined position and orientation parameterization, with a pitch derivative at lift-off of 4 rad/s. 
\begin{figure}[htbp]
     \centering
     \includegraphics[width=0.49\columnwidth]{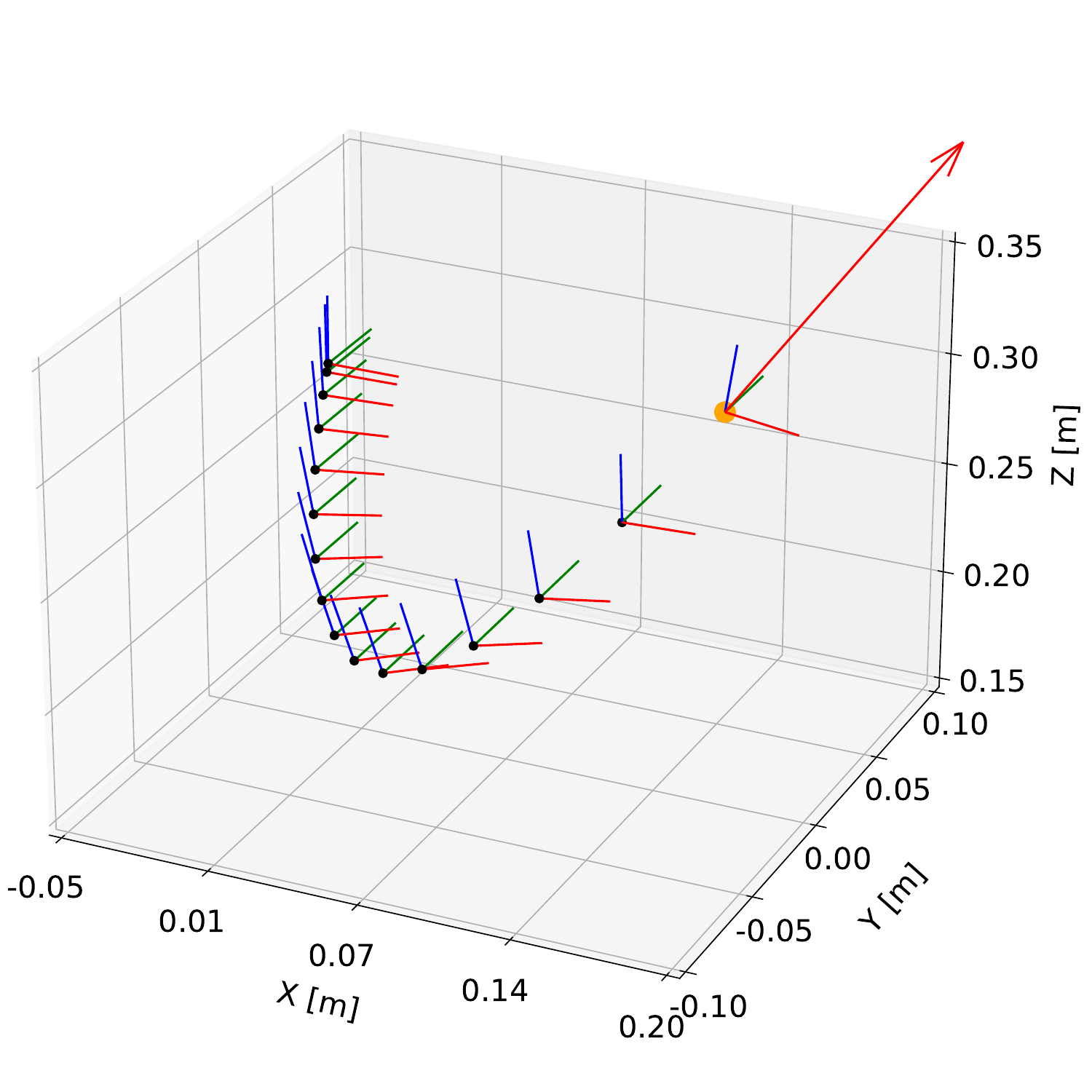}
     \includegraphics[width=0.49\columnwidth]{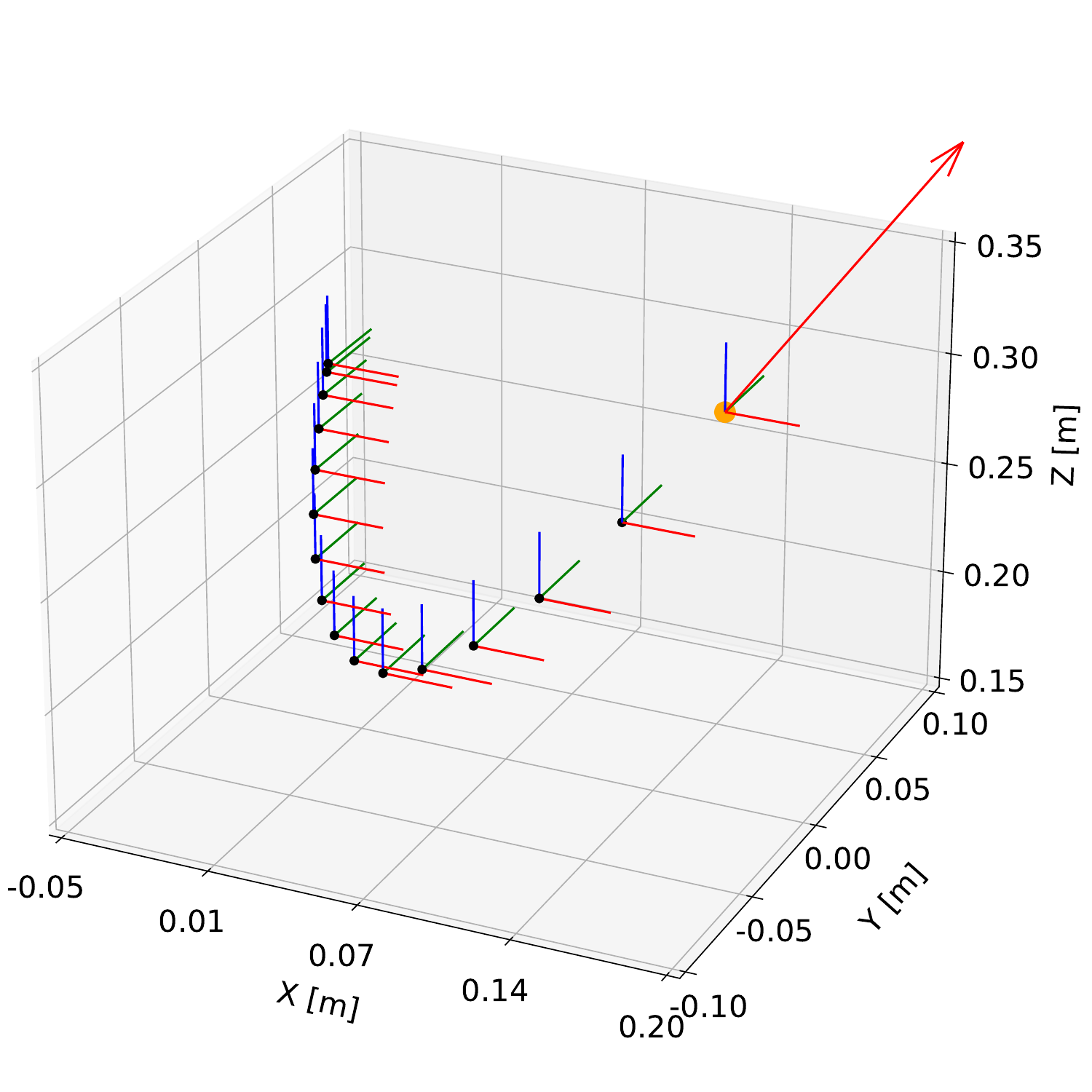}
     \caption{3D visualization of the thrust trajectory with (left) and without (right) angular motion.}
     \label{fig:bez-orientation}
\end{figure}

\subsection{Explosive Thrust}

The Bézier-based parameterization  \eqref{eq:expl-bez} might limit the expressiveness of the thrust, particularly during the explosive decompression phase.  According to the control point calculations in \eqref{eq:bezier-control-points}, for a given fixed thrust time, if we increase the lift-off velocity while keeping other boundary conditions unchanged, the control point $\mathbf{P}_{2}$ stretches the curve, resulting in a longer path to travel within the same timeframe.
Certain lift-off velocities may need to be excluded from the feasible set, as the resulting trajectory could violate the robot's workspace constraints—for example, causing excessive squatting that would require an unfeasibly low height (i.e., the robot's belly would touch the ground, see Fig. \ref{fig:traj-expl-comp}). A promising solution to this issue is to split the trajectory in two parts where after the initial decompression stage, dictated by the Bezier, the path, in a second part,  flattens to a straight line where the robot follows an \gls{uarm} (see Fig. \ref{fig:traj-expl}). 
\begin{figure}[ht!]
\centering
    \includegraphics[width=0.7\columnwidth]{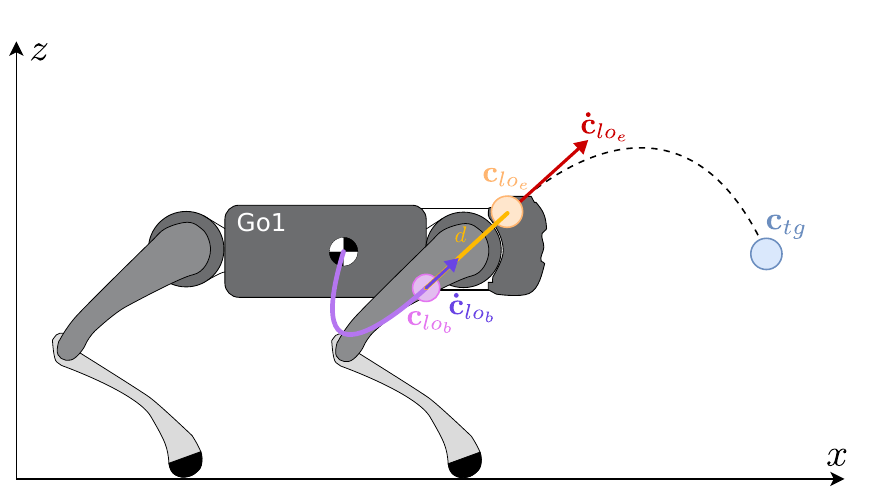}
    \caption{Thrust trajectory with the new Bézier-\gls{uarm} parametrization.}
    \label{fig:traj-expl}
\end{figure}
This would allow us to maintain control over the thrust trajectory shape
(i.e. do not result in workspace violations or unfeasible robot heights) while ensuring  higher lift-off velocities. 
As explained later, this solution will introduce only a few additional parameters to the \textit{linear} part of the thrust trajectory, 
while the orientation remains unchanged.
Let's  outline the \gls{uarm} equations of motion:
\begin{equation}
\left\{
    \begin{aligned}
        & v_{f} = v_{0} + a(t-t_{0})\\[2mm]
        & d_{f} = \frac{1}{2}a(t-t_{0})^{2} + v_{0}(t-t_{0}) + d_{0}
    \end{aligned}
\right.
\end{equation}
Where \(v_{0}\) and \(v_{f}\) are the magnitudes of the initial and final velocities, \(d_{0}\) and \(d_{f}\) are the initial and final displacements (on a line), and \(t_{0}\) is the initial time. 
Solving  for the time variable, setting $t_{0} = 0$ for the sake of simplicity,  after some manipulation we obtain the time term $t$ and the acceleration term $a$:
\begin{equation}
\left\{
    \begin{aligned}
        & a = \frac{1}{2} \frac{v_{f}^{2} - v_{0}^{2}}{d_{f} - d_{0}} \\[2mm]
        & t = \frac{v_{f} - v_{0}}{a}
    \end{aligned}
\right.
\label{eq:uarm-system}
\end{equation}
Previously, our thrust trajectory was characterized by the initial and lift-off \gls{com} positions and linear velocities $(\mathbf{c}_{0}, \mathbf{\dot{c}}_{0})$, $(\mathbf{c}_{lo}, \mathbf{\dot{c}}_{lo})$. Now, we introduce an intermediate configuration, \((\mathbf{c}_{lo_{b}}, \mathbf{\dot{c}}_{lo_{b}})\), which represents the endpoint of the Bézier parameterization and the starting point for the \gls{uarm} trajectory. The actual lift-off configuration, will be the final condition of the \gls{uarm} trajectory, now denoted as \((\mathbf{c}_{lo_{e}}, \mathbf{\dot{c}}_{lo_{e}})\). 
Given this new concept and notation, the system \eqref{eq:uarm-system} becomes:
\begin{equation}
\left\{
    \begin{aligned}
        & a = \frac{1}{2} \frac{\Vert\mathbf{ \dot{c}}_{lo_{e}}\Vert^{2} - \Vert\mathbf{\dot{c}}_{lo_{b}}\Vert^{2}}{\Vert\mathbf{c}_{lo_{e}}\Vert - \Vert\mathbf{c}_{lo_{b}}\Vert} \\
        & T_{th_{e}} = \frac{\Vert\mathbf{\dot{c}}_{lo_{e}}\Vert - \Vert\mathbf{\dot{c}}_{lo_{b}}\Vert}{a}
    \end{aligned}
\right.
\label{eq:uarm-system-new}
\end{equation}
Where \(T_{th_{e}}\) is the duration of  the \gls{uarm} part of the thrust trajectory. 
Renaming the duration of Bézier curve as \(T_{th_{b}}\), so the total thrust time becomes:
\begin{equation}
    T_{th} = T_{th_{b}} + T_{th_{e}}
    \label{eq:tot-time}
\end{equation}
Introducing the explosive lift-off configuration as additional parameters would involve adding six new variables, as \(\mathbf{\dot{c}}_{lo_{e}} \in \mathbb{R}^{3}\) and \(\mathbf{c}_{lo_{e}} \in \mathbb{R}^{3}\). However, such an increase would be inconveniently large.
\begin{figure}[ht!]
\centering
    \includegraphics[width=0.8\columnwidth]{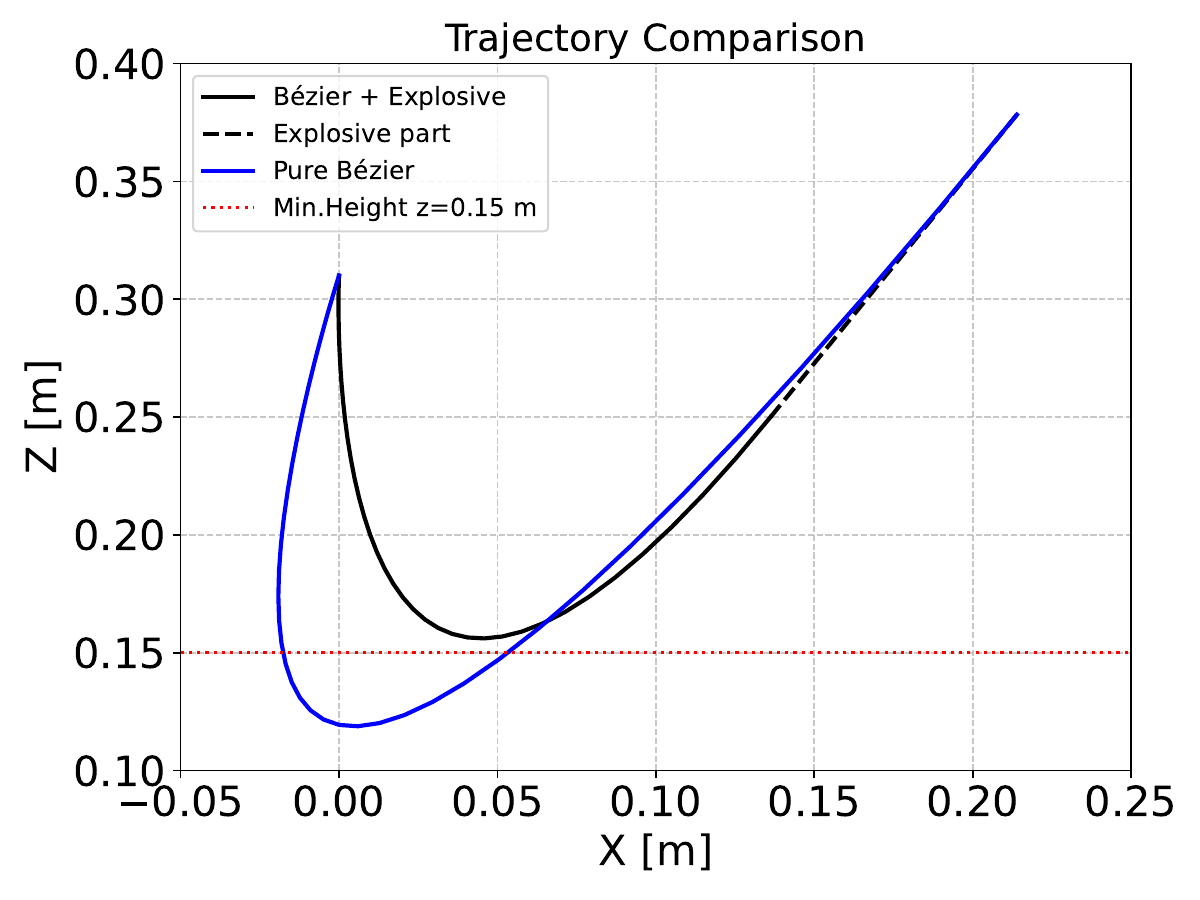}
    \caption{Comparison of the two thrust trajectories computed using only Bézier (blue)  and Bézier+Explosive thrust (black).}
    \label{fig:traj-expl-comp}
\end{figure}
The properties of the Bézier curve can be leveraged to define a reduced set of parameters for the jumping trajectory leveraging the \gls{uarm} assumption. For example, the explosive lift-off velocity ($\mathbf{\dot{c}}_{lo_{e}}$)  can be represented as a scaled version of the Bézier lift-off velocity vector  ($\mathbf{\dot{c}}_{lo_{b}}$). Similarly, the total displacement of the explosive trajectory ($\mathbf{c}_{lo_{e}}$) can be expressed as an offset from the Bézier lift-off position ($\mathbf{c}_{lo_{b}}$) in the direction of the Bézier lift-off velocity.
We introduce the velocity multiplier term \(k\), so the explosive lift-off velocity is defined as:
\begin{equation}
    \mathbf{\dot{c}}_{lo_{e}} = k \mathbf{\dot{c}}_{lo_{b}} \quad k \geq 1
\end{equation}
To compute $\mathbf{c}_{lo_{e}}$  we first evaluate the unit vector based
on the Bézier lift-off velocity to establish the direction of motion:
\begin{equation}
    \mathbf{\hat{\dot{c}}}_{lo_{b}} = \frac{\mathbf{\dot{c}}_{lo_{b}}}{\| \mathbf{\dot{c}}_{lo_{b}} \|}
\end{equation}
Next, we introduce the displacement value \(d\), allowing us to define the explosive lift-off position as:
\begin{equation}
    \mathbf{c}_{lo_{e}} = \mathbf{c}_{lo_{b}} + d \mathbf{\hat{\dot{c}}}_{lo_{b}} \quad d \geq 0
\end{equation}

By using this approach, only two parameters, $k$ and $d$, are introduced instead of six, 
providing sufficient expressiveness without significantly increasing the problem's dimensionality. 
This formulation inherently ensures also that the explosive part is a direct continuation 
of the Bézier trajectory. The policy can still  choose to exclude the contribution of the \gls{uarm} phase for certain jumps by simply setting \(d = 0\). Thus, the \gls{uarm} trajectory can be defined as:

{\small\begin{equation}
    \text{uarm}(t)=
    \left\{    
    \begin{aligned} 
        \mathbf{c}(t)&=\text{lerp}\left(\mathbf{c}_{lo_{b}}, \mathbf{c}_{lo_{e}}, \frac{t - T_{th_{b}}}{T_{th_{e}}} \right)\\
        \mathbf{\dot{c}}(t)&=\text{lerp}\left(\mathbf{\dot{c}}_{lo_{b}}, \mathbf{\dot{c}}_{lo_{e}}, \frac{t - T_{th_{b}}}{T_{th_{e}}} \right)
    \end{aligned}
    \right. T_{th_{b}}\leq t \leq T_{th}  
    \label{eq:uarm_parametrization} \raisetag{2\normalbaselineskip} 
\end{equation}}
where lerp is a linear interpolating function, while, the Bézier position trajectory is expressed as:

{\small \begin{equation}
    \text{Position-Bézier}(t) =
    \left\{
    \begin{aligned}        
        \mathbf{c}(t) &=  \mathbf{B}_p\left(\frac{t}{T_{th_{b}}}, \mathbf{P} \right)\\
        \mathbf{\dot{c}}(t) &= \mathbf{\dot{B}}_p\left(\frac{t}{T_{th_{b}}}, \mathbf{P} \right)
    \end{aligned}
    \right. 0\leq t \leq T_{th_{b}}  
    \label{eq:bezier_parametrization_position}\raisetag{2\normalbaselineskip} 
\end{equation}}
where the control points \eqref{eq:bezier-control-points}, using the new notation, are defined as:

\begin{table}[htbp]
	\centering
	\caption{Position-Bézier Control points}
	\label{tab:bezier-control-points-position}
	\begin{tabular}{l|l}   
		\hline\hline
		Position & Velocity \\
		\hline      
		$\mathbf{P}_{0} = \mathbf{c}_{0}$  & $\mathbf{P^{\prime}}_{0} = \mathbf{\dot{c}}_{0}$\\[2mm]
		$\mathbf{P}_{1} = \frac{T_{th_{b}}}{3}\mathbf{\dot{c}}_{0} + \mathbf{c}_{0}$ & $\mathbf{P^{\prime}}_{1} = \frac{3}{T_{th_{b}}}(\mathbf{P}_{2}-\mathbf{P}_{1})$\\[2mm]
		$\mathbf{P}_{2} = -\frac{T_{th_{b}}}{3}\mathbf{\dot{c}}_{lo_{b}} + \mathbf{c}_{lo_{b}}$ & $\mathbf{P^{\prime}}_{2} = \mathbf{\dot{c}}_{lo_{b}}$\\[2mm]
		$\mathbf{P}_{3} =  \mathbf{c}_{lo_{b}}$ & ~ \\  
		\hline\hline
	\end{tabular}   
\end{table}

Fig. \ref{fig:traj-expl-comp} shows the two resulting trajectories: 
in black the new parameterization method, and  in blue the previous pure Bézier one. In this example, a high final lift-off velocity with a magnitude of \(3 \, \frac{m}{s}\) is required in both cases, with the same thrust time \(T_{th}\). As illustrated, the previous parameterization produces an unfeasible trajectory that violates the minimum height constraint of \(z \geq 0.15 \, \text{m}\). This demonstrates that the new parameterization can generate explosive thrust trajectories while satisfying the system's feasibility constraints.
For the orientation, instead, we employ the Bézier curve parametrization throughout the whole thrust duration $T_{th}$:

{\small\begin{equation}
    \text{Orientation-Bézier}(t) =
    \left\{
    \begin{aligned}        
        \boldsymbol{\Phi}(t) &=  \mathbf{B}_o\left(\frac{t}{T_{th}}, \mathbf{Q} \right)\\
        \boldsymbol{\dot{\Phi}}(t) &= \mathbf{\dot{B}}_o\left(\frac{t}{T_{th}}, \mathbf{Q} \right)
    \end{aligned}
    \right. 0 \leq t \leq T_{th}  
    \label{eq:bezier_parametrization_orientation}
\end{equation}}
with control points defined as:
\begin{table}[htbp]
    \centering  \caption{Orientation-Bézier Control points}
    \begin{tabular}{l|l}   
        \hline\hline
         Position & Velocity \\ 
         \hline      
         & \\
         $\mathbf{Q}_{0} = \boldsymbol{\Phi}_{0}$            & $\mathbf{Q^{\prime}}_{0} = \boldsymbol{\dot{\Phi}}_{0}$\\
         $\mathbf{Q}_{1} = \frac{T_{th }}{3}\boldsymbol{\dot{\Phi}}_{0} + \boldsymbol{\Phi}_{0}$ &       $\mathbf{Q^{\prime}}_{1} = \frac{3}{T_{th}}(\mathbf{Q}_{2}-\mathbf{Q}_{1})$\\
        $\mathbf{Q}_{2} = -\frac{T_{th}}{3}\boldsymbol{\dot{\Phi}}_{lo} + \boldsymbol{\Phi}_{lo}$   & $\mathbf{Q^{\prime}}_{2} = \boldsymbol{\dot{\Phi}}_{lo}$\\
         $\mathbf{Q}_{3} =  \boldsymbol{\Phi}_{lo}$ & \\
         & \\
         \hline\hline
    \end{tabular}   
    \label{tab:bezier-control-points-orientation}
\end{table}
\section{Learning Framework}
\label{sec:rl}
The design of state and action spaces is crucial to ensure that a learned policy can be applied across robots with different morphologies. To achieve this, the state space should include only task-specific terms, such as the robot’s centre of mass (COM) position and orientation, as well as the target position and orientation, while avoiding any joint-space information that would differ between robots. Mapping task-space information to joint space can be effectively handled by inverse kinematics (IK). This approach is complemented by the use of a low-level controller to ensure robust and accurate tracking performance, thereby enabling reusable policies across different robots \cite{shafiee2024manyquadrupeds}.

\subsection{The State }
The state \(s_{t}\) is a representation of the environment at a specific time \(t\) and is fundamental for the policy to determine the appropriate action to perform. 
%
We define the state as the combination of the robot's current configuration (position + pose) and the desired configuration 
that we express  as a displacement relative to the current (starting) position. 
In our setup, for simplicity, the robot always starts from a fixed joint configuration, meaning its initial position is constant. As a result, including this initial configuration in the state representation is redundant. Therefore, the state space can be more efficiently reformulated as:
\begin{equation}
    \mathbf{s} = (\Delta\mathbf{c}, \Delta\mathbf{\Phi}) \in \mathbb{R}^{6}
\end{equation}
Where  $\Delta\mathbf{c} = \mathbf{c}_{tg} - \mathbf{c_0}$, $\Delta\boldsymbol{\Phi} = \mathbf{\Phi}_{tg} - \mathbf{\Phi_0}$. This new state space directly captures the displacement command in both position and orientation, streamlining the policy's decision-making process.
%
\subsection{The Action}
The parameters of the thrust trajectory represent the action, i.e., the values that the policy must predict 
based on the input state. 
Considering the proposed thrust trajectory parameterization for position \eqref{eq:uarm_parametrization}, \eqref{eq:bezier_parametrization_position}, and orientation \eqref{eq:bezier_parametrization_orientation} 
the following parameters are required to compute the trajectory:
\begin{itemize}
    \item \(T_{th_{b}}\): the time interval in the thrust for which the initial Position-Bézier is used
    \item \(\mathbf{c}_{lo_{b}}\): the lift-off position used for the Position-Bézier trajectory
    \item \(\mathbf{\dot{c}}_{lo_{b}}\): the lift-off velocity used for the Position-Bézier trajectory
    \item \(\mathbf{c}_{lo_{e}}\): the lift-off position used for the \gls{uarm} trajectory, derived using the displacement term $d$
    \item \(\mathbf{\dot{c}}_{lo_{e}}\): the lift-off velocity used for the \gls{uarm} trajectory, derived using the velocity multiplier term $k$
    \item \(\mathbf{\Phi}_{lo}\): the lift-off orientation used for the Orientation-Bézier trajectory
    \item \(\mathbf{\dot{\Phi}}_{lo}\): the lift-off Euler rates  used for the Orientation-Bézier trajectory
\end{itemize}
Then, $T_{th_{e}}$ is computed analytically via \eqref{eq:uarm-system-new} and the total thrust time $T_{th}$ is obtained by \eqref{eq:tot-time}.
The dimensionality of the action space is thigly related to the size of the exploration space and thus on learning performance. A smaller action space reduces the exploration area, simplifying the complexity of the mapping and speeding up the learning process. A trick to  reduce the size of the action space is to express both the Bézier \gls{com} lift-off position \(\mathbf{c}_{lo_{b}}\) and linear velocity \(\mathbf{\dot{c}}_{lo_{b}}\) in \textit{spherical coordinates}. 

Due to the natural decoupling between linear and angular dynamics we treat the linear and angular motions separately. 
If no angular motion is present,  due to the ballistic nature of the flight phase,  the entire jump trajectory lies within the plane connecting the initial and target locations with  the yaw angle $\varphi_{jp}$ (which defines the orientation of the jumping plane in the $X-Y$ frame) remaining constant throughout the jump. 
This angle can be computed analytically from the initial and target positions restricting the coordinates of 
the lift-off configuration  to a convex, two-dimensional space:
\begin{equation}
    \left\{\begin{matrix}
    \begin{aligned}
     \mathbf{c}_{lo_{b},x} &= r_p \cos(\theta_p) \cos(\bar{\varphi}) \\
     \mathbf{c}_{lo_{b},y} &= r_p \cos(\theta_p) \sin(\bar{\varphi}) \\
     \mathbf{c}_{lo_{b},z} &= r_p \sin(\theta_p)
    \end{aligned}
    \end{matrix}\right.
    \quad
    \left\{\begin{matrix}
    \begin{aligned}
     \mathbf{\dot{c}}_{lo_{b},x} &= r_{v} \cos(\theta_{v}) \cos(\bar{\varphi}) \\
     \mathbf{\dot{c}}_{lo_{b},y} &= r_{v} \cos(\theta_{v}) \sin(\bar{\varphi}) \\
     \mathbf{\dot{c}}_{lo_{b},z} &= r_{v} \sin(\theta_{v})
    \end{aligned}
    \end{matrix}\right.
    \label{eq:spherical-action}
\end{equation}
As shown in Fig.\ref{fig:action-simp}, the lift-off position $\mathbf{c}_{lo_{b}}$ is defined by the extension radius $r_p$, pitch angle $\theta_p$, and pre-computed yaw angle $\bar{\varphi}$. The lift-off velocity $\mathbf{\dot{c}}_{lo_{b}}$ shares the same yaw  $\bar{\varphi}$ and is described by its magnitude $r_{v}$ and pitch angle $\theta_{v}$.
\begin{figure}[htbp]
      \includegraphics[width=0.38\columnwidth]{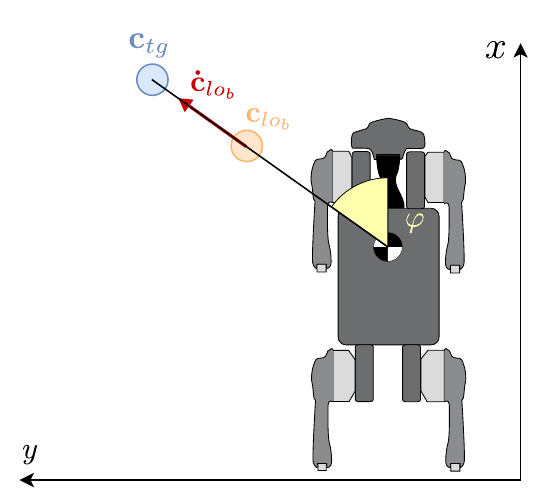}
      \includegraphics[width=0.6\columnwidth]{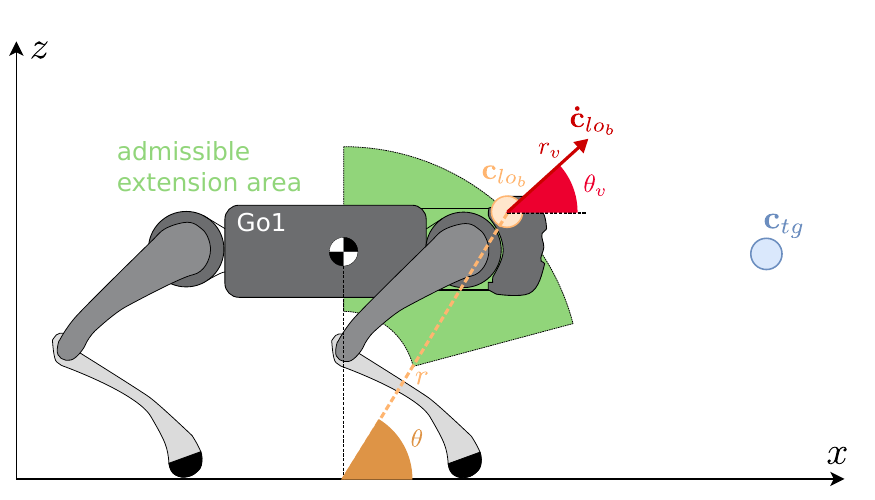}
      \caption{Top and side views of the action in spherical coordinates for the quadruped robot}
      \label{fig:action-simp}
\end{figure}

Given this, we can now define the  action space, which encompasses both 
the position and angular components of the thrust trajectory, as follows:
\begin{equation}
    \begin{aligned}
        \mathbf{a} & = (T_{th_{b}}, r_p , \theta_p, r_{v}, \theta_{v},  k, d,\mathbf{\Phi}_{lo,\psi}, \mathbf{\Phi}_{lo,\theta}, \\&~~~~~~~ \mathbf{\Phi}_{lo,\varphi}, 
        \mathbf{\dot{\Phi}}_{lo,\psi}, \mathbf{\dot{\Phi}}_{lo,\theta}, \mathbf{\dot{\Phi}}_{lo,\varphi}
        ) \in \mathbb{R}^{13}
    \end{aligned}
\end{equation}

The ranges of these values can be further constrained using domain knowledge to further narrow the search space.  
The thrust duration ($T_{th}$) must be short but sufficient for precise, controlled lift-off; 
The radius $r_p$ has to be smaller than a value $r_{max}$   to prevent boundary singularity due to leg
over-extension, and greater than a value $r_{min}$  to avoid complete leg retraction. 
The bounds on the velocity $\dot{c}_{lo}$, represented by $r_{v}$  and $\theta_{v}$ are set to rule out jumps that involve excessive foot slippage and useless force effort. 
Specifically, restricting $\theta_{v,min}$ to be positive ensures a non-negligible vertical component for the velocity, while bounding $\theta_v$ to the positive quadrant secures that the lift-off velocity will be oriented "toward" the target. The limits on lift-off roll/pitch/yaw angles ($\boldsymbol{\Phi}_{lo}$) limit the amount of possible  re-orientation adjustments during the thrust to avoid losing stability and maneuverability, especially on uneven terrain; while the Euler rates ($\boldsymbol{\dot{\Phi}}_{lo}$) at lift-off determine  the reorientation of the robot during the subsequent flight phase; the Velocity Multiplier ($k$) amplifies thrust explosiveness, but must stay within trackable limits; the final displacement ($d$) adds travel distance and affects thrust phase duration. Given their physical meaning, we can impose specific ranges on these values (see Table \ref{tab:acrion_params_ranges}).  
Note that the above reductions, done at the level of action design, prevent the agent from exploring trajectories that are physically impossible, reducing the search space without any loss in terms of optimality.

\subsection{A Physically Informative Reward Function}
The reward function serves as the sole indicator that evaluates the effectiveness of an action in achieving the task goal. 
This comprises of a \textit{physical} component which is a summation of penalties which is  subtracted from a  \textit{target} 
component (positive) that assesses the achievement of the task's main objective:  i.e. how closely the robot landed to the target location. 

The core idea behind our reward design is that achieving the maximum reward related to the task goal requires adherence to and minimization of a set of physical system constraints. We have previously introduced the concept of path constraints, but to recap: a path constraint is a restriction that must be satisfied throughout the entire thrust phase. Any violation of these constraints results in a penalty. These path constraints are evaluated at each time step, and violations are penalized using the linear activation function \(A(x,\underline{x},\overline{x})\) defined in \eqref{eq:activation-func}. 
\begin{equation}
    A(x,\underline{x},\overline{x}) =
    \begin{vmatrix}
    \min(x-\underline{x},0) + \max(x-\overline{x},0)
    \end{vmatrix}
    \label{eq:activation-func}
\end{equation}
All penalties are accumulated into a feasibility cost $C_{f}$.
The set of path constraints, essential for ensuring the physical feasibility and safe execution of the jump, are evaluated at every time step and include:  joint position, velocity, and torque limits, friction cone, unilaterality, and staying away from singularity. By minimizing the penalties associated with any violations,  the policy ensures that the robot operates in a physically feasible and reliable manner throughout the entire jump.
However, path constraints are not sufficient to fully guide the policy towards optimal jump behavior. Additional costs are needed to promote desirable behaviors while penalizing unwanted actions. These costs differ from path constraints acting like behavioral “nudges” as they focus more on guiding specific behavior during the jump.

\begin{itemize}
\item \textbf{Lift-off tracking error} ($C_{lo}$): is the deviation between the reference and the actual twist at lift-off. This cost promotes the fact that the trajectory performed by the robot is accurately tracked by the low-level controller.
\item \textbf{Target orientation error} ($C_{\mathbf{\Phi}_{tg}}$): Instead of incorporating the orientation error in the positive reward, we treat it as a penalty. While the primary goal is to reach the target location, minimizing the orientation error at landing is crucial for task success because it enables to avoid catastrophic falls after touchdown.
\item \textbf{Touchdown bounce penalization} ($C_{\Delta x}$): This regularization cost penalizes multiple bounces after landing, which can occur because we do not employ a sophisticated landing controller (e.g., as in \cite{roscia2023reactive}) and instead rely on simple joint impedance control. Excessive horizontal velocity at touchdown may lead to re-bouncing behavior before the robot comes to a complete stop, or even destabilize it, potentially causing a fall. To mitigate this, the cost is set to a high default value when no valid touchdown is detected - such as when the robot makes contact with the ground using body parts other than the feet. Otherwise, the cost is computed proportionally to the distance between the first detected touchdown and the robot's position at the end of the episode (timeout).
\item \textbf{Touchdown angular velocity penalization} ($C_{\boldsymbol{\dot{\Phi}}_{td}}$): This penalty discourages high angular velocities at touchdown, which could cause the robot to lose balance or suffer mechanical damage, leading to a failed jump.
\item \textbf{Action limit penalization} ($C_{ppo}$): 
Differently from other \gls{rl} algorithms that bound the action range to $[-1,1]$, thanks to a   $\tanh$ layer, 
the \gls{ppo} algorithm computes actions by sampling from a normal distribution \(a \sim \mathcal{N}(\mu,\sigma^2)\), which is not inherently constrained. To prevent actions from exceeding the defined range, clipping is introduced. Although this operation does not compromise the mathematical formulation of \gls{ppo}, the choice of clipping range significantly impacts policy performance \cite{fujita2018clipped}. Therefore, determining the appropriate clipping range for \gls{ppo} becomes an important hyperparameter (see Table \ref{tab:hyper_params}). 
The $C_{ppo}$ penalty informs the policy when an action exceeds the predefined bounds, 
thereby helping the policy to stay within the safe and feasible range.
\end{itemize}

These additional costs guide the policy towards safer, more stable, and physically feasible jump execution, 
further refining the policy's behavior during training.
Regarding the positive reward task, we refer to it as the \textit{landing target reward } function, denoted by $R_{lt}$. 
This term measures how close the final landing location (evaluated at the episode's timeout) is to the target location.
In this term, we also want to weigh errors differently depending on the jumping magnitude. The goal is to better distinguish the impact of landing errors $\text{e}_{tg}$ based on the jump distance $\Delta{\mathbf{c}}$. For example, an error of 0.2 meters is significant for a 0.1 meter-long jump but less severe for a 1 meter jump. Therefore, for a given distance error, the reward should increase proportionally with the desired jump distance. 
The revised landing target reward function is defined as follows:

%
%
%

{\small
\begin{equation}
    R_{lt} = 1_{\mathbb{R}^{+}}\left[ \exp\left(-\frac{\text{e}_{tg}}{ \sigma_e}\right) \exp\left(\frac{\Delta{\mathbf{c}}}{\sigma_d}\right) \right]
\end{equation}}

with 
\begin{equation*}
    \text{e}_{tg} = \| \mathbf{c}_{tg}-\mathbf{c}  \| \quad \Delta\mathbf{c} = \|\mathbf{c}_{tg} - \mathbf{c}_{0}\|
\end{equation*}
where $\sigma_e$ is a hyperparameter scaling the influence of the landing error, and $\sigma_d$  the contribution of the jump distance.
The first negative exponential achieves its maximum for zero landing error and rewards \textit{accuracy} while the second (positive exponential) amplifies the previous factor according to the jump difficulty (i.e. to the same error is associated a higher reward if the jump distance is higher). This formulation makes the reward sensitive to both the landing error and the magnitude of the jump, ensuring that the policy is encouraged to perform well across different jump distances.
In Fig. \ref{fig:targ-rew}, the surface plot of this reward function is depicted.
As can be observed, the reward increases as the landing error approaches zero. Furthermore, for the same landing error, the reward value increases proportionally with the desired jump distance, reflecting the relative importance of accuracy across different distances.
\begin{figure}[b]
\centering
    \includegraphics[width=0.8\columnwidth]{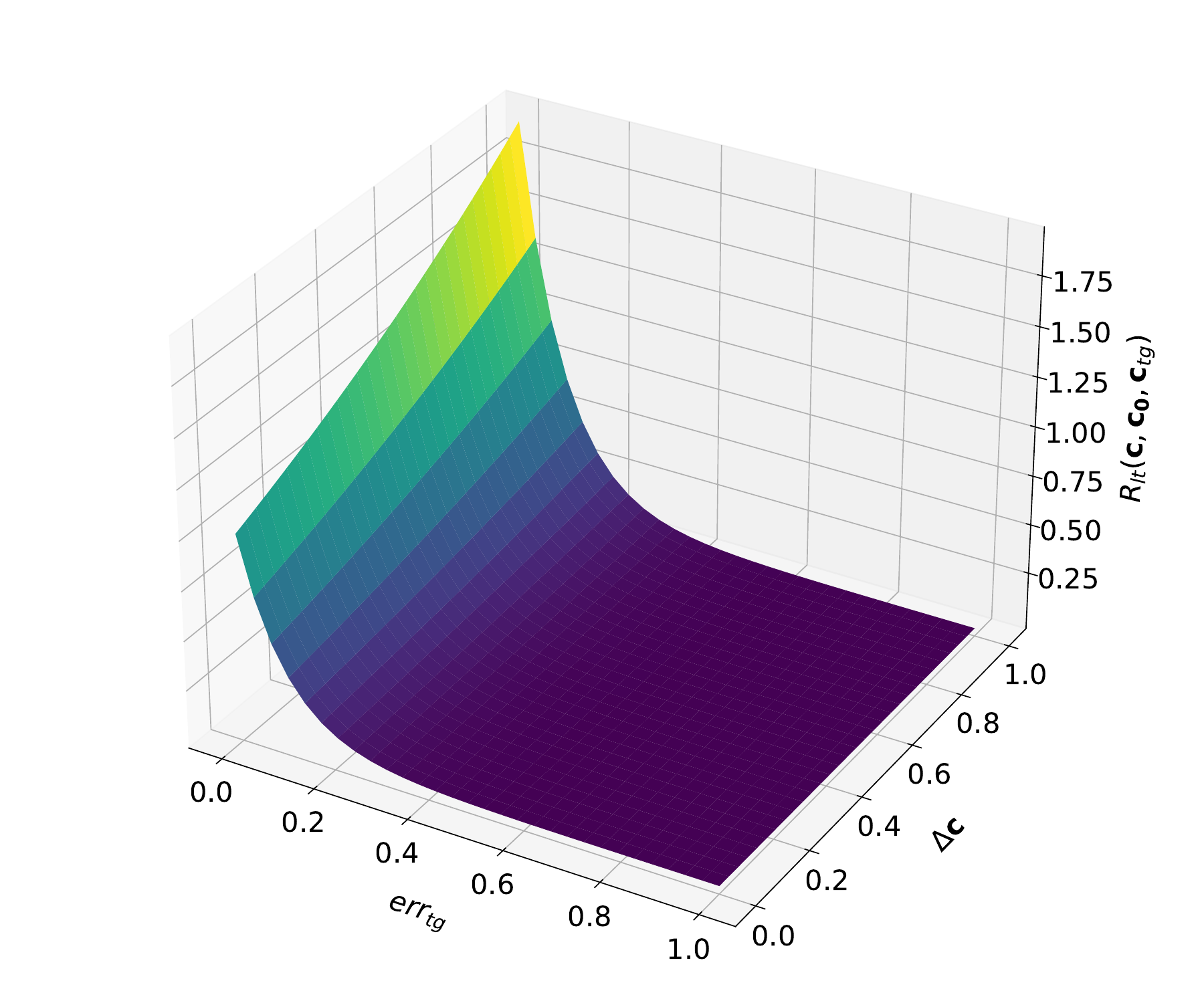}
    \caption{Landing target reward function surface function of landing error and jump distance.}
    \label{fig:targ-rew}
\end{figure}
\noindent
Finally, the total  reward function is \cite{atanassov2024curriculumbased}:
\begin{equation}
R = R_{lt}(\mathbf{c},\mathbf{c_{0}}, \mathbf{c}_{tg}) e^{-\left(\sum C_{f,i}\right)^2}
\end{equation}
In this formulation, the reward remains positive and is dynamically scaled based on the penalty costs. 
As the sum of penalties increases, the negative exponent drives the reward toward zero, effectively 
suppressing any positive contribution. Conversely, when penalties are minimal, the exponential term approaches 
one, preserving the full value of the landing target reward. 
This design keeps the reward function informative across the learning process, fostering better convergence and policy performance.
To maximize the reward, the policy must minimize the cost penalties, thereby ensuring that 
all system constraints are respected. This guarantees that the proposed trajectory 
is both feasible and trackable, enabling the robot to perform optimal jumps within its physical
limitations, ensuring safe and efficient deployment of the learned behaviors in real-world scenarios.
%
\section{Results }
\label{sec:results}
%
%
\subsection{Simulation Setup}
In this section, we present the results of our work, validated through simulation-based evaluation of the learned policy.
We report the hardware specifications, as they critically affect training time and parallelization performance.
Training and evaluation were performed on a desktop with an Nvidia RTX 3060 GPU with 12 GB of VRAM. 
The policy was trained using a modified version of the Orbit framework \cite{mittal2023orbit} compatible with our RL loop, leveraging parallel simulation of 4096 robots.
The training has been performed both for the Go1 and the Aliengo, but, henceforth, we will detail the results solely for the Go1 robot.
The robot controller and simulator parameters are detailed in Table \ref{tab:robot_params}, while the 
hyperparameters used in the training are listed in Table \ref{tab:hyper_params}.
Table \ref{tab:acrion_params_ranges} specifies the ranges of action parameters defined 
from physical intuition with the aim to further restrict the search space. 
\ifdefined\springer
	\begin{table}[htb!]
	\centering
	\caption{Go1 Robot and Simulator parameters}
	\begin{tabular}{c| p{4.5cm} | c } 
	    \hline
	    \textbf{Variable} & \textbf{Name} & \textbf{Range}\\
	    \hline    \hline
	     $m$ &  Robot Mass [$kg$] & 13\\
	    \hline
	     P & Proportional gain & 50 \\
	    \hline
	     D & Derivative gain & 0.8 \\
	    \hline
	     $\vect{q}_0$ & Nominal configuration  [$rad$] & \mat{0&-0.75&1.5} \\
	    \hline
	     $dT$ & Simulator time step [$s$] & 0.001\\
	    \hline
	     $\tau_{max}$ & Max Joint torque [$Nm$] & \mat{23.7&23.7&35.5} \\
	     \hline
	          $\dot{q}_{max}$ & Max Joint vel. [$m/s$] &\mat{20&20&30} \\
	    \hline
	    \hline
	\end{tabular}
	\label{tab:robot_params}
	\end{table}
\else
	\begin{table}[htb!]
			\centering
			\caption{Go1 Robot and Simulator parameters}
			\resizebox{\columnwidth}{!} {
				\begin{tabular}{c| p{4.5cm} | c } 
					\hline
					\textbf{Variable} & \textbf{Name} & \textbf{Range}\\
					\hline    \hline
					$m$ &  Robot Mass [$kg$] & 13\\
					\hline
					P & Proportional gain & 50 \\
					\hline
					D & Derivative gain & 0.8 \\
					\hline
					$\vect{q}_0$ & Nominal configuration  [$rad$] & \mat{0&-0.75&1.5} \\
					\hline
					$dT$ & Simulator time step [$s$] & 0.001\\
					\hline
					$\tau_{max}$ & Max Joint torque [$Nm$] & \mat{23.7&23.7&35.5} \\
					\hline
					$\dot{q}_{max}$ & Max Joint vel. [$m/s$] &\mat{20&20&30} \\
					\hline
					\hline
			\end{tabular}}
			\label{tab:robot_params}
	\end{table}
\fi
\ifdefined\springer
	\begin{table}[tbp]
		\centering
			\caption{\gls{ppo} hyper-parameters}
				\begin{tabular}{p{6cm} | c } 
					\hline
					\textbf{Name} & \textbf{Value}\\
					\hline    \hline 
					Episode timout             & 1.5s  \\    
					\hline 
					Initial noise std             & 1.0  \\    
					\hline
					Network dimension             & [512,256,128] \\    
					\hline
					Activation layer             & ELU \\    
					\hline
					PPO Clip             & 0.2 \\    
					\hline
					Entropy coefficient             & 0.01 \\    
					\hline
					Number of learning epochs           & 10 \\    
					\hline
					Learning rate             & 1e-3 \\    
					\hline
					Gamma            & 0.99 \\    
					\hline
					Lam            & 0.95 \\    
					\hline
					Desired KL            & 0.01 \\    
					\hline
			\end{tabular}
			\label{tab:hyper_params}
\end{table}
\else
{\small
	\begin{table}[tbp]
	\centering
	\caption{\gls{ppo} hyper-parameters}
	\resizebox{\columnwidth}{!} {
	\begin{tabular}{p{6cm} | c } 
	        \hline
	    \textbf{Name} & \textbf{Value}\\
	    \hline    \hline 
	    Episode timout             & 1.5s  \\    
	    \hline 
	    Initial noise std             & 1.0  \\    
	    \hline
	    Network dimension             & [512,256,128] \\    
	    \hline
	    Activation layer             & ELU \\    
	    \hline
	    PPO Clip             & 0.2 \\    
	    \hline
	    Entropy coefficient             & 0.01 \\    
	    \hline
	    Number of learning epochs           & 10 \\    
	    \hline
	    Learning rate             & 1e-3 \\    
	    \hline
	    Gamma            & 0.99 \\    
	    \hline
	    Lam            & 0.95 \\    
	    \hline
	    Desired KL            & 0.01 \\    
	    \hline
	\end{tabular}}
	\label{tab:hyper_params}
	\end{table}}
\fi
%
%
%
The learning episode starts from the same initial default configuration $\vect{q}_0$ continues until a timeout, 
requiring the robot to regain balance and return to a nominal state after landing. 
To maintain a unified  control strategy, we decided not to include the \gls{lc} as
part of the simulation in the training phase, relying on a simplified impedance strategy.
During the thrust phase, a default joint stiffness is set in the PD controller plus a \gls{wbc} for gravity compensation. 
Since all four feet are in contact with the ground throughout the thrust phase, the control strategy tracks the desired Cartesian trajectory and compensates for gravity. Contact is maintained by ensuring the fulfillment of both the friction cone and unilateral constraints during training.
After thrust, the stiffness is reduced, the \gls{wbc} is disabled, and the legs are retracted until the jump apex. After that, the legs extend back to the default configuration. Finally, upon touchdown, \gls{wbc} is reactivated to compensate for gravity.
To prevent torque discontinuities during configuration changes in the airborne phase, 
Cubic Hermite Spline Interpolation is used to smoothly transition joint references between extended and retracted configurations.
In a future extension of this work, we plan to also introduce a separate \gls{nn} trained using an \gls{e2e}
approach, specifically tasked to adjust the joint configuration at each time step emulating a \gls{lc}. 
For each completed episode, one \gls{ppo} training step is performed, amounting to a total of 2000 training steps. 
Training the policy under this setup took approximately 15 hours. However, it's important to emphasize 
the sample efficiency of our approach when compared to typical \gls{e2e} methods, 
such as those available in the Orbit framework. These methods generally require the recorded data of 20 to 25 simulation steps to complete a single training step. In contrast, our method 
performs a training step with just one env step, corresponding to the entirety of the 
episode since the episode is composed of only one action. This means that, given an equivalent 
number of policy training steps, our approach uses 20 times less data, 
highlighting the remarkable sample efficiency of the proposed framework, despite the training 
time is comparable (the overhead due to the simulation of the robots will be the same). 

\ifdefined\springer
	\begin{table}[tb!]
	\centering
	\caption{Ranges of action parameters}
	\begin{tabular}{c| c | c } 
	    \hline
	    \textbf{Variable} & \textbf{Name} & \textbf{Range}\\
	    \hline    \hline
	     $T_{th_{b}}$&  Time of Bezier thrust [s]& [0.4, 1.0]\\
	    \hline
	     $r_p$ & Extension radius of Bezier thrust[m] & [0.2, 0.4] \\
	    \hline
	     $\theta_p$ & Position Pitch angle of Bezier thrust[rad] & [$\pi$/4, $\pi$/2]\\
	    \hline
	     $r_v$ & \gls{com} Velocity magnitude of Bezier thrust [m/s] & [0.5, 5]\\
	    \hline
	     $\theta_p$ & \gls{com} Velocity Pitch angle of Bezier thrust [rad] & [-$\pi$/6, $\pi$/2] \\
	    \hline
	     $k$ & Velocity multiplier for explosive thrust& [1,3] \\
	    \hline
	    $d$ & Position displacement for explosive thrust& [0,0.3] \\
	    \hline
	    $\mathbf{\Phi}_{lo,\psi}$ & The pitch angle at lift-off & [-$\pi$/6, $\pi$/6] \\
	    \hline
	    $\mathbf{\Phi}_{lo,\theta}$ & The roll angle at lift-off & [-$\pi$/6, $\pi$/6] \\
	    \hline
	    $\mathbf{\Phi}_{lo,\varphi}$ & The yaw angle at lift-off  & [-$\pi$/4, $\pi$/4] \\
	    \hline
	    $\mathbf{\dot{\Phi}}_{lo,\psi}$ &  The angular velocity around the roll axis & [-1,1] \\
	    \hline
	    $\mathbf{\dot{\Phi}}_{lo,\theta}$ &  The angular velocity around the pitch axis & [-1,1] \\
	    \hline
	    $\mathbf{\dot{\Phi}}_{lo,\varphi}$ & The angular velocity around the yaw axis & [-4, 4] \\
	    \hline
	\end{tabular}
	\label{tab:acrion_params_ranges}
	\end{table}
\else
	\begin{table}[tb!]
	\centering
	\caption{Ranges of action parameters}
	\resizebox{\columnwidth}{!} {
		\begin{tabular}{c| c | c } 
			\hline
			\textbf{Variable} & \textbf{Name} & \textbf{Range}\\
			\hline    \hline
			$T_{th_{b}}$&  Time of Bezier thrust [s]& [0.4, 1.0]\\
			\hline
			$r_p$ & Extension radius of Bezier thrust[m] & [0.2, 0.4] \\
			\hline
			$\theta_p$ & Position Pitch angle of Bezier thrust[rad] & [$\pi$/4, $\pi$/2]\\
			\hline
			$r_v$ & \gls{com} Velocity magnitude of Bezier thrust [m/s] & [0.5, 5]\\
			\hline
			$\theta_p$ & \gls{com} Velocity Pitch angle of Bezier thrust [rad] & [-$\pi$/6, $\pi$/2] \\
			\hline
			$k$ & Velocity multiplier for explosive thrust& [1,3] \\
			\hline
			$d$ & Position displacement for explosive thrust& [0,0.3] \\
			\hline
			$\mathbf{\Phi}_{lo,\psi}$ & The pitch angle at lift-off & [-$\pi$/6, $\pi$/6] \\
			\hline
			$\mathbf{\Phi}_{lo,\theta}$ & The roll angle at lift-off & [-$\pi$/6, $\pi$/6] \\
			\hline
			$\mathbf{\Phi}_{lo,\varphi}$ & The yaw angle at lift-off  & [-$\pi$/4, $\pi$/4] \\
			\hline
			$\mathbf{\dot{\Phi}}_{lo,\psi}$ &  The angular velocity around the roll axis & [-1,1] \\
			\hline
			$\mathbf{\dot{\Phi}}_{lo,\theta}$ &  The angular velocity around the pitch axis & [-1,1] \\
			\hline
			$\mathbf{\dot{\Phi}}_{lo,\varphi}$ & The angular velocity around the yaw axis & [-4, 4] \\
			\hline
	\end{tabular}}
	\label{tab:acrion_params_ranges}
	\end{table}
\fi
We define the training region for the policy as a cuboid space with the X-axis ranging from -0.6 to 1.2 meters, 
the Y-axis from -0.6 to 0.6 meters, and the Z-axis from -0.4 meters to 0.4 meters,
allowing jumps up to 40 cm elevation. This setup ensures learning omnidirectional jumps to both elevated and depressed targets, 
providing a broad exploration of possible landing configurations. 
For the angular component, the training region is limited to a maximum of 15 degrees for roll and pitch, ensuring the robot can handle minor tilts during the jump. 
However, the yaw angle, which is of greater importance in this work, is allowed to vary significantly, covering a range from -90 to 90 degrees. This range ensures the policy can perform jumps with diverse orientations, preparing the robot for more dynamic scenarios. 

In Fig. \ref{fig:rew-pen}, we present both the total reward curve 
and the cumulative penalty sums for each training step. The total reward (upper plot) converges after only 2000 training steps. 
A crucial observation is the even faster convergence (after 250 episodes) of the cumulative penalties (lower plot) to a small value, suggesting that the learned policy quickly adapts to producing physically feasible actions and adheres to system constraints. This serves as strong evidence that our reward formulation is effective, driving the policy towards solutions that satisfy the constraints while optimizing the task’s objectives.
\begin{figure}[htbp]
     \centering
         \includegraphics[width=0.8\columnwidth]{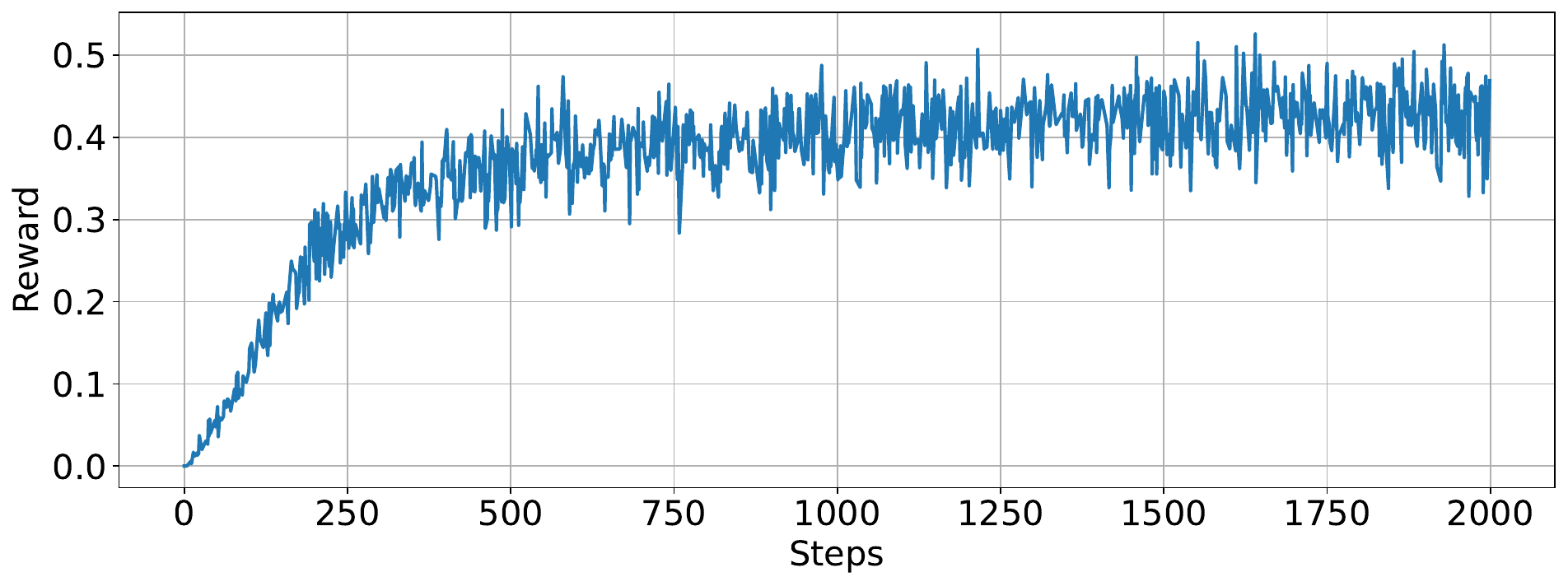}
         \includegraphics[width=0.82\columnwidth]{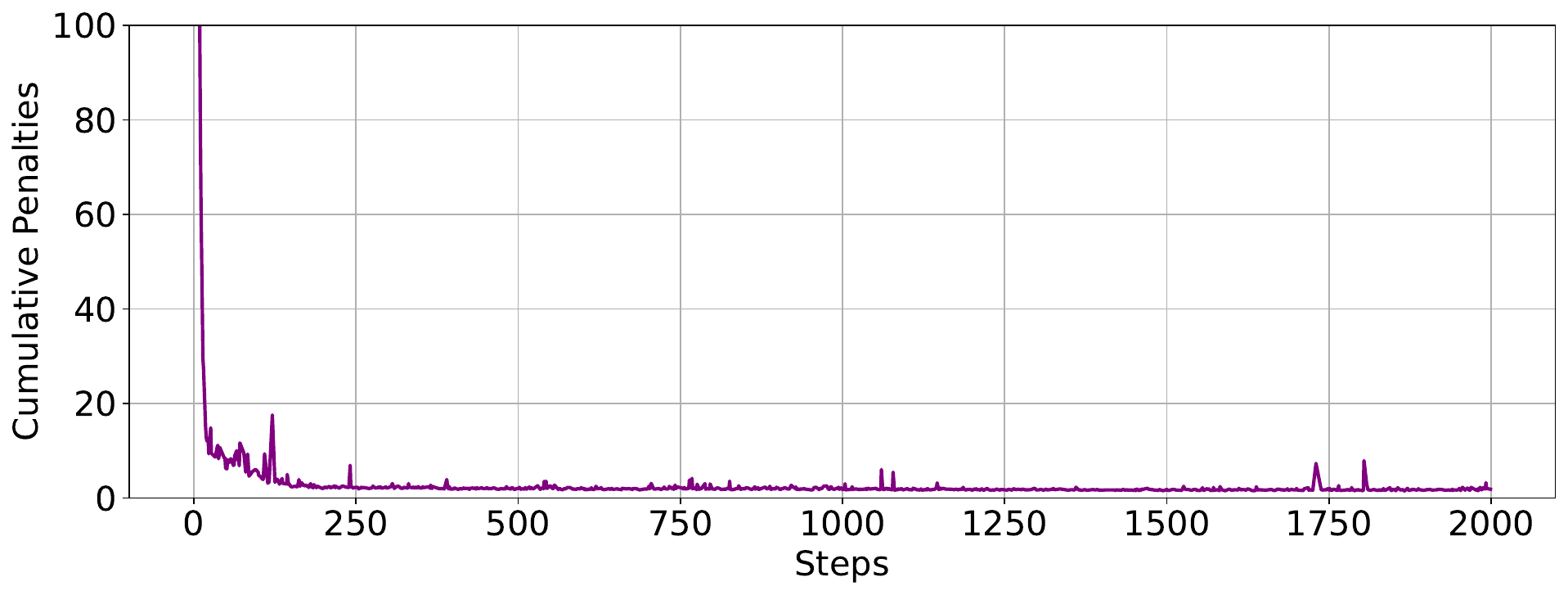}
        \caption{Training curves over training episodes for the Go1 robot:  total reward  (top) and cumulative penalties  (bottom), highlighting the policy's improvement and its adherence to physical constraints.}
        \label{fig:rew-pen}
\end{figure}
\subsection{Physical feasibility check}
\label{sec:physical_check}
The physically informed  nature of our action enables us to perform a safety check on the action $\vect{a}$ 
proposed by the \gls{rl} policy, adding a certain level of explainability with respect to fully data-driven approaches. 
This feasibility check can be employed as an ``a posteriori''   
safety feature in the \textit{inference} phase,  
to check if a predicted action will lead to unsafe results. 
For instance, if the given \gls{com} vertical velocity is not sufficient to reach 
the target height, the action will not produce a physically meaningful jump. 
This can be computed by obtaining the time to reach the apex $T_{f,up} = \vect{\dot{c}}_{{lo},z}/g$ 
and substituting it in the ballistic equation:
\begin{align}
    \bar{\vect{c}}_z(T_{fup}) &= \vect{c}_{{lo},z} + \vect{\dot{c}}_{{lo},z}T_{fup} + \frac{1}{2}(-g)T_{fup}^2  
\label{eq:apex}
\end{align}
This results in $\bar{\vect{c}}_z(T_{fup}) = \vect{c}_{{lo},z} +  \frac{1}{2}\frac{\vect{\dot{c}}_{{lo},z}^2}{g}$, 
which is the \textit{apex} elevation. If $\vect{c}_{tg,z}>\bar{\vect{c}}_z(T_{fup})$,  
the action can be aborted early without performing the jump, and high-level strategies could be 
adopted to relax the jumping requirements (e.g., lower the target height). 
This mechanism enhances safety and reliability by preventing the execution of jumps that could lead to failures or unsafe outcomes.

%
%
\subsection{Validation of Omnidirectional Jumps}
In this section, we evaluate the performance of the learned policy in executing 
omnidirectional jumps on elevated/depressed terrains as well as its
effectiveness in controlling orientation during dynamic jumps. 
The ability to jump in multiple directions, combined with the capability to adapt to varying 
surface heights, enhances the system's versatility for real-world scenarios, 
where simple, single-direction jumps or fixed-height landings are often insufficient.
%
Specifically, this validation will encompass forward, backward, 
and lateral jumps, as well as involving upward or downward motions on/from uneven surfaces.
Rapid in-place/composite jumps involving yaw rotations will also be assessed. 
Through this validation, we aim to show the policy's precision 
in reaching target positions and orientations.
All the simulations have been performed for the Unitree Go1 robot.
\subsubsection{Flat Jumps and Feasible Region}
For this evaluation, we conducted 8192 jumps at various target positions, all with zero height displacement (flat jumps). 
The targets are sampled from the previously established training region, providing a comprehensive evaluation of the learned policy. 
The concept of the feasible region refers to the maximum area in which the robot 
can perform jumps while keeping the landing error below a specified threshold \cite{bussola2024efficient}. 
The absolute landing error (i.e. the distance between the \textit{actual} landing position and the intended target) was calculated for each sample. 
Jumps were classified as acceptable or not based on a landing error below a threshold of 0.2 meters.

As depicted in Fig. \ref{fig:feasible-region}, the computed feasible region illustrates the performance of the robot across the tested positions. 
A key observation is the symmetry of performance with respect to the X-axis, which reflects the symmetrical configuration of the robot’s legs. This symmetry confirms that the robot achieves consistent jumping accuracy for lateral displacements, with no noticeable difference between 
rightward and leftward jumps. Although the jumps in the backward direction were not tested over long distances, 
the results suggest similar performance to frontward jumps.
The logarithmic color scale used in the figure emphasizes the subtle differences in landing precision, 
 highlighting that the robot's best performance is within the 0.5-meter radius from the origin, 
where it achieves exceptional accuracy. Landing errors in this range are consistently below 5 cm, 
showcasing the high precision and of the learned policy for short to medium-range jumps.
\begin{figure}[th!]
\centering
    \includegraphics[width=1.0\columnwidth]{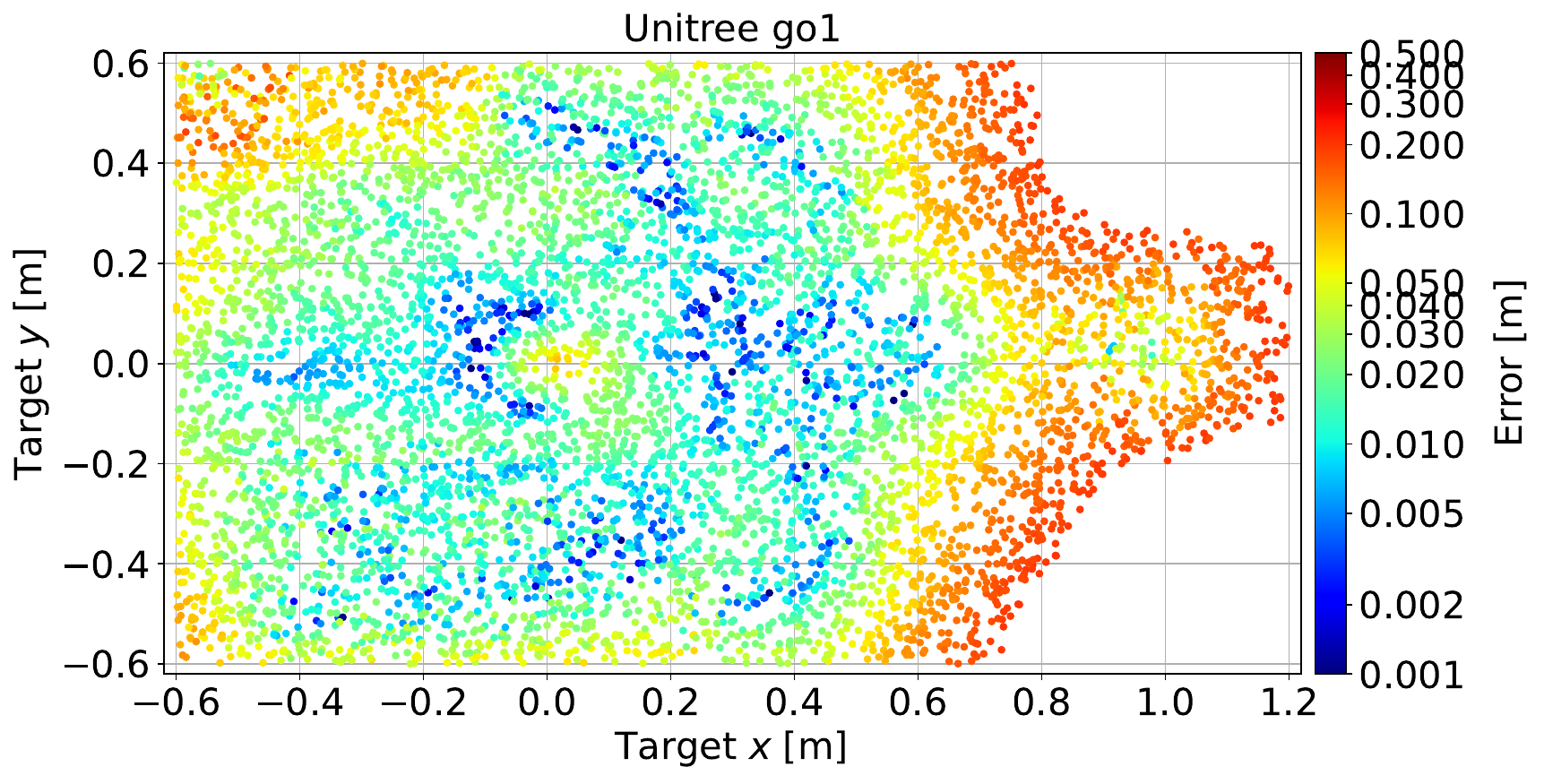}
    \caption{Feasible region for flat omni-directional jumps (without change in orientation) for the Go1 robot: set of targets with landing error below 0.2 m.}
    \label{fig:feasible-region}
\end{figure}
To gain a clearer understanding of the robot's behavior during longer jumps, Fig. \ref{fig:actual-vs-target} 
presents a comparison between the target and actual distances achieved by the robot for both front (left plot) and back jumps (right plot).
\begin{figure}[th!]
\centering
    \includegraphics[width=1.0\columnwidth]{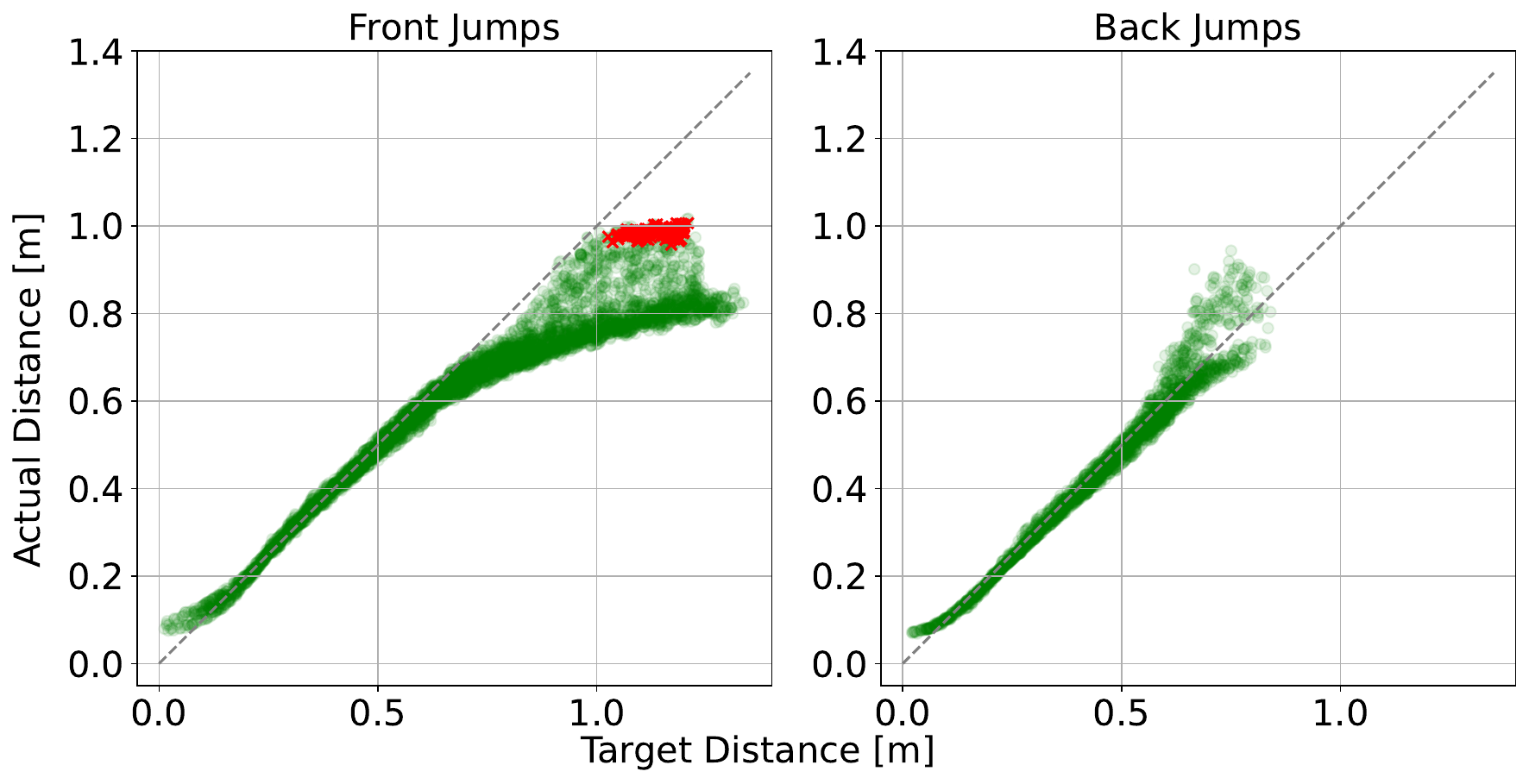}
    \caption{In green, target vs actual jump distance for both front (left plot)  and back (right plot) jumps using the \textit{omni-directional} jump policy with the Go1 robot. Jump failures are highlighted in red.}
    \label{fig:actual-vs-target}
\end{figure}
In an ideal scenario, the robot’s actual jump distances would align perfectly with the desired target, 
depicted by the dashed diagonal line in the plot.
The green shaded area corresponds to jumps executed without any collisions involving non-foot parts 
of the robot, representing the actual performance of the policy. The red samples are jumps that are failing.
For target distances up to 0.6 meters, the policy performs well, showing precise and consistent jumps indicating that the robot effectively tracks the desired trajectory.
The relatively narrow distribution of points in this range indicates that the robot maintains good accuracy as the target distance varies within this limit.
However, as the target distance exceeds 0.6 meters, the policy exhibits more cautious behavior. 
This conservatism is reflected in the robot's inability to execute larger jumps beyond this distance. 

This suggests that the policy prioritizes feasibility and safety over maximizing jump distance, 
aiming to ensure stable landings and minimize the risk of failure and is consistent with the trend 
observed in the reward plot, where minimal violations of constraints were recorded. Rather than 
pushing the system toward potentially unsafe physical limits, it favours reliability and consistency. 
Introducing a \gls{nn} focused on improving landing proficiency is expected to significantly enhance performance, particularly by enabling longer jumps. This is because rebounce effects will be handled by the dedicated landing policy, rather than requiring the jump policy to adopt a conservative approach.
Finally, there are 124 failure points around the 1-meter jump distance, equivalent to only the $1\%$ of the jumps. 
%
%
\subsubsection{Upward and Downward Jumps}
In this subsection, we evaluate the performance of the learned policy in executing jumps 
involving both positive and negative vertical displacements, therefore focusing on upward and downward jumps. 
These types of jumps are critical for robotic navigation on uneven terrains or in environments 
with varying surface elevations, where the ability to adapt to different heights 
is essential for effective mobility, and are mostly overlooked in the literature.
The tests were conducted within the samples in the test region as before, but with target locations 
where Z component was set  to different values along the z-axis.
These ranged from upward jumps to higher platforms (positive vertical displacement) to downward jumps onto lower surfaces (negative vertical displacement).
For each target location (x,y),  the maximum/minimum achievable height is recorded. The results of these tests were used to generate height maps (see Fig. \ref{fig:height-map}) for upward (left) and downward  jumps (right), respectively.

As shown in the left contour plot, the robot is capable of executing jumps to surfaces with up to 0.26 meters of vertical displacement. The total range of successful upward jumps lies between 0.16 meters and 0.28 meters, with some variation depending on the lateral displacement. Notably, the robot tends to struggle with upward jumps targeting locations in the back region, particularly when combined with significant lateral displacement. This behavior is expected due to the kinematic constraints of the robot in these specific configurations, where the required force distribution for pushing off is less efficient (given the leg shape), and the available range of motion becomes more limited.

In contrast, the right contour plot demonstrates that the robot can successfully perform downward 
jumps across its entire feasible region, with vertical displacements  up to -0.4 meters, which 
aligns with the bounds established during the training and test phase.  
This indicates that the robot could possibly  handle even larger drops. 
The greater effectiveness in downward jumps can be attributed to the assistance of 
gravity in achieving the target, making the kinematic demands less stringent compared to upward jumps.
\begin{figure}[htbp]
\centering
    \includegraphics[width=\columnwidth]{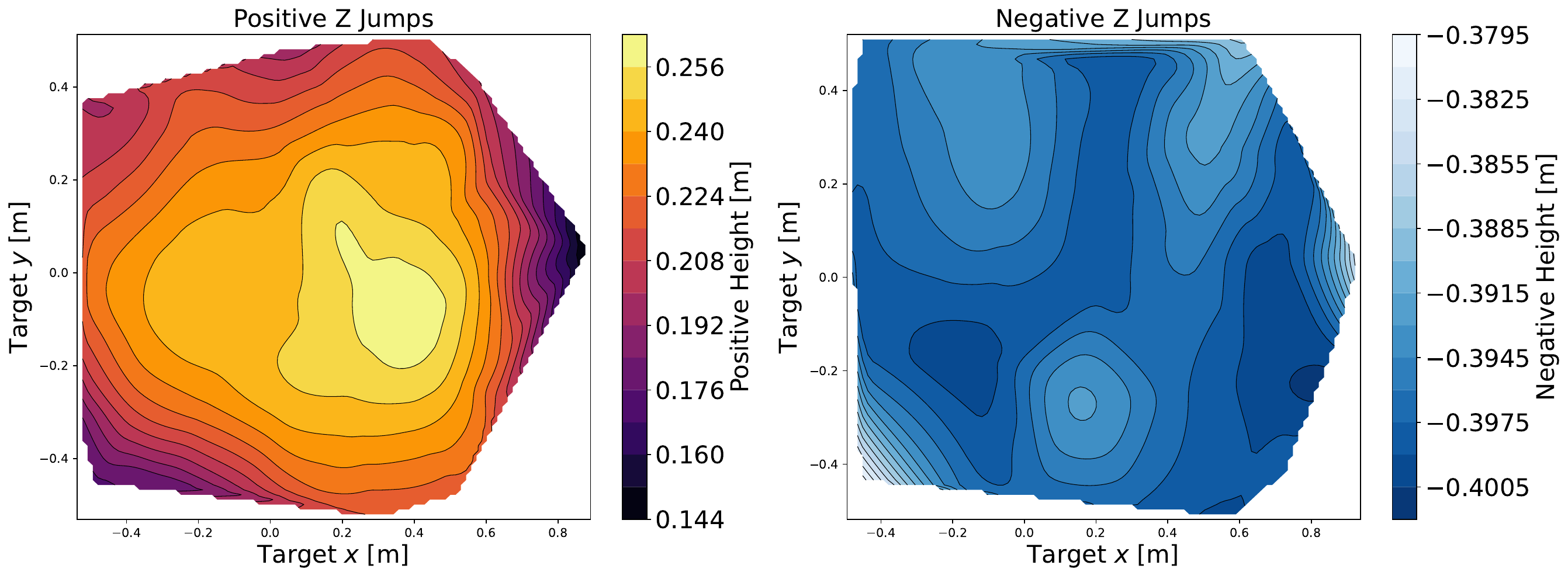}
    \caption{Height map for both upward jumps (on the left) and downward jumps (on the right) for the Go1 robot.}
    \label{fig:height-map}
\end{figure}
\subsubsection{In-place Rotation Jumps}
In this test, we performed a series of in-place jumps where each jump involved 
a different target  orientation change for the yaw component.   This capability is particularly beneficial for executing maneuvers that require quick rotations, such as twist jumps or adjustments to prepare for subsequent locomotion.
The test examines whether the learned policy can effectively control the robot’s orientation 
ensuring it reaches the desired landing position with a specified orientation.
The polar plot in Fig. \ref{fig:polar-twist} shows the variation of the landing orientation error (radius) 
as a function of the desired yaw angle. In-place jumps with no orientation change 
(origin of the polar plot) serve as a baseline, resulting in minimal orientation error, as expected. However, as the desired yaw angle increases in magnitude (both positively and negatively) the orientation error raises symmetrically. This symmetric behavior reflects the robot's  ability to handle yaw rotations in either direction.

The maximum observed yaw error is approximately 6 degrees, which is a satisfactory result, considering that it stays under $10\%$ of the desired yaw angle across a wide range of rotations. This level of error indicates that the learned policy is proficient at managing dynamic jumps involving significant reorientation, maintaining a reasonable level of accuracy even under challenging angular displacement scenarios. 
\begin{figure}[th!]
\centering
    \includegraphics[width=0.8\columnwidth]{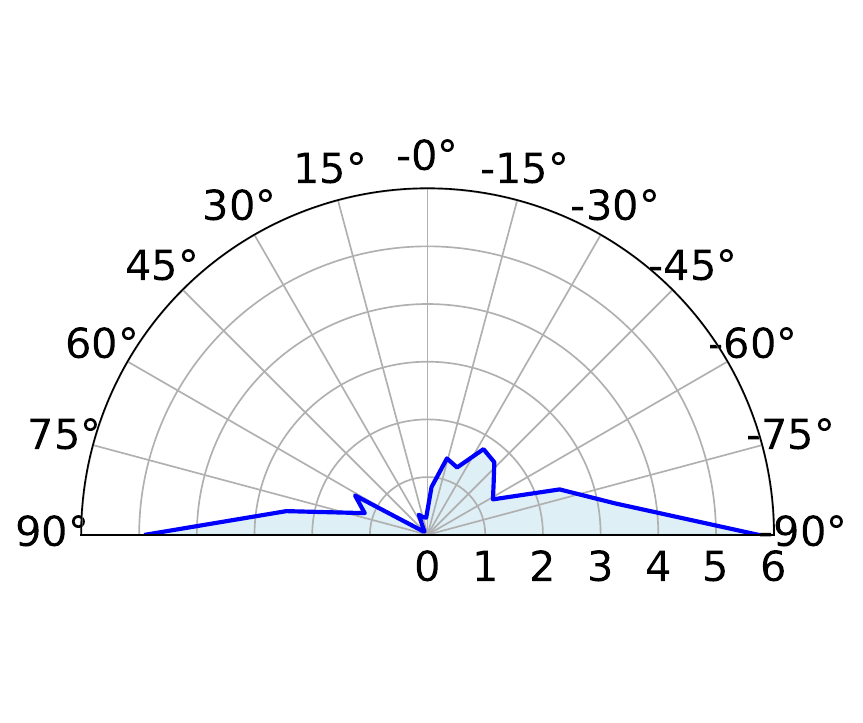}
    \caption{Polar plot of the landing orientation error (radius) vs the desired yaw angle, both expressed in degrees for the Go1 robot.}
    \vspace{-0.5cm}
    \label{fig:polar-twist}
\end{figure}
\subsection{Sim-to-sim evaluation and robustness to parameter variation}
In this section, we will test the trained policy  using a  simulation  framework (Locosim \cite{focchi2023locosim} based on Gazebo) 
different than the one used for training (Orbit).
Our intent is twofold: 1)  assess the impact of the sim-to-sim gap by evaluating how different simulation environments affect robot behavior and performance; 2)    systematically study how changes in physical parameters—such as mass distribution and joint friction affect the robot’s dynamic behavior, with a specific focus on the accuracy and reliability of jumping maneuvers.
To this extent we consider two types of jumps on flat terrain: a forward jump of 0.4 m (\textit{FWD}) and a composite diagonal jump (\textit{DIAG})  with target $c_{tg,x}$ = 0.3 m, $c_{tg,y}$ = 0.2 m, $\psi_{tg}$ = 45 deg; 3 types of tests: \textit{NOM} (no parametric variation), \textit{DV} (variations on joint damping) and  \textit{MV} (variations on mass); and two robot platforms (Go1, Aliengo). 
We performed 100 simulations for each tuple (robot, jump type, test). For the cases \textbf{DV} and \textbf{MV}  with parametric variations  sampled from a continuous uniform distribution $\mathcal{U}(0,1)$ as follows:
\begin{equation}
\begin{split}
&m_{\text{var}} = m_0 + \underbrace{m_0  p}_{\Delta m_{\text{max}}}  \mathcal{U}(-1,1)\\
&d_{\text{var}} = d_{\text{max}} \cdot \mathcal{U}(0,1)
\end{split}
\end{equation}
Here, $m_0$ is the default mass, $p=0.5$ is the perturbation percentage, 
and $m_{\text{var}}$ and  $d_{\text{var}}$ are the modified mass and damping, respectively, (the default  damping  being 0). 
Then, for each tuple, we computed the mean and the standard deviation of the  
errors on the  landing position  $e_{x}$, $e_{y}$ and orientation $e_{\psi}$   as reported in Table \ref{tab:statistics}.
\ifdefined\springer
	\begin{table}[th!]
	\centering
	\caption{Statistics of accuracy VS  parameter uncertainty}
	\begin{tabular}{c|c|c | c | c | c | c } 
	    \hline
	     \textbf{\#}& \textbf{Robot} & \textbf{Jump Type} & \textbf{Test} & $\mathbf{ e_x }$ &  $\mathbf{ e_y }$ & $\mathbf{e_{\psi}}$\\
	    \hline\hline
	   T1& Go1     & \textit{FWD}  & \textbf{\textit{NOM}} &   0.013  $\pm$  0.002 & X                   &  X\\
	   T2& Go1     & \textit{FWD}   & \textit{DV}           &   0.059  $\pm$  0.023 & X                   &  X\\
	   T3& Go1     & \textit{FWD}   & \textit{MV}           &   0.014  $\pm$  0.007 & X                   &  X\\
	   T4& Go1     & \textit{DIAG}  & \textbf{\textit{NOM}} &   0.019  $\pm$  0.000 & -0.024 $\pm$ 0.001   &  1.523 $\pm$ 0.128\\
	   T5& Go1     & \textit{DIAG}   & \textit{DV}           &   0.032  $\pm$  0.011 & -0.003 $\pm$ 0.012   &  3.982 $\pm$ 2.237\\
	   T6& Go1     & \textit{DIAG}   & \textit{MV}           &   0.019  $\pm$  0.001 & -0.023 $\pm$ 0.005   &  1.632 $\pm$ 1.137\\
	   T7& Aliengo & \textit{FWD}   & \textit{NOM}          &   0.010  $\pm$  0.017 &  X                   &  X  \\
	   T8& Aliengo & \textit{FWD}   & \textit{DV}           &   0.044  $\pm$  0.024 &  X                   &  X  \\
	   T9& Aliengo & \textit{FWD}   & \textit{MV}           &   0.014  $\pm$  0.024 &  X                   &  X  \\
	   T10& Aliengo & \textit{DIAG}   & \textit{NOM}          &   0.005  $\pm$  0.013 & -0.005 $\pm$ 0.028   & 10.992 $\pm$ 2.871\\
	   T11& Aliengo & \textit{DIAG}   & \textit{DV}           &   0.028  $\pm$  0.015 &  0.020 $\pm$ 0.024   &  9.166 $\pm$ 3.577\\
	   T12& Aliengo & \textit{DIAG}   & \textit{MV}           &   0.009  $\pm$  0.016 &  0.003 $\pm$ 0.030   & 11.304 $\pm$ 3.520\\
	    \hline\hline
	\end{tabular}
	\label{tab:statistics}
	\end{table}
\else
	\begin{table}[th!]
		\centering
		\caption{Statistics of accuracy VS  parameter uncertainty}
		\resizebox{\columnwidth}{!} {
			\begin{tabular}{c|c|c | c | c | c | c } 
				\hline
				\textbf{\#}& \textbf{Robot} & \textbf{Jump Type} & \textbf{Test} & $\mathbf{ e_x }$ &  $\mathbf{ e_y }$ & $\mathbf{e_{\psi}}$\\
				\hline\hline
				T1& Go1     & \textit{FWD}  & \textbf{\textit{NOM}} &   0.013  $\pm$  0.002 & X                   &  X\\
				T2& Go1     & \textit{FWD}   & \textit{DV}           &   0.059  $\pm$  0.023 & X                   &  X\\
				T3& Go1     & \textit{FWD}   & \textit{MV}           &   0.014  $\pm$  0.007 & X                   &  X\\
				T4& Go1     & \textit{DIAG}  & \textbf{\textit{NOM}} &   0.019  $\pm$  0.000 & -0.024 $\pm$ 0.001   &  1.523 $\pm$ 0.128\\
				T5& Go1     & \textit{DIAG}   & \textit{DV}           &   0.032  $\pm$  0.011 & -0.003 $\pm$ 0.012   &  3.982 $\pm$ 2.237\\
				T6& Go1     & \textit{DIAG}   & \textit{MV}           &   0.019  $\pm$  0.001 & -0.023 $\pm$ 0.005   &  1.632 $\pm$ 1.137\\
				T7& Aliengo & \textit{FWD}   & \textit{NOM}          &   0.010  $\pm$  0.017 &  X                   &  X  \\
				T8& Aliengo & \textit{FWD}   & \textit{DV}           &   0.044  $\pm$  0.024 &  X                   &  X  \\
				T9& Aliengo & \textit{FWD}   & \textit{MV}           &   0.014  $\pm$  0.024 &  X                   &  X  \\
				T10& Aliengo & \textit{DIAG}   & \textit{NOM}          &   0.005  $\pm$  0.013 & -0.005 $\pm$ 0.028   & 10.992 $\pm$ 2.871\\
				T11& Aliengo & \textit{DIAG}   & \textit{DV}           &   0.028  $\pm$  0.015 &  0.020 $\pm$ 0.024   &  9.166 $\pm$ 3.577\\
				T12& Aliengo & \textit{DIAG}   & \textit{MV}           &   0.009  $\pm$  0.016 &  0.003 $\pm$ 0.030   & 11.304 $\pm$ 3.520\\
				\hline\hline
		\end{tabular}}
		\label{tab:statistics}
	\end{center}
	\end{table}
\fi

Inspecting the values for the nominal test (T1 and T4) we can observe a good matching for a \textit{FWD} jump (see Fig. \ref{fig:feasible-region}) in terms of the position error ($\approx 1.3$ cm) and for a \textit{DIAG} jump  (see Fig. \ref{fig:polar-twist}) in terms of the orientation error (1.5 deg) demonstrating a low sim-to-sim gap.
Regarding the tests involving changes in dynamic parameters of the robot there is a very little influence of the mass (despite its variation is up to 50\%)  while a higher influence is observed for the damping with the position error in the \textit{FWD} jump which is increasing from 1.5 cm (T1) to  5.9 cm (T2),  which is, in percentage, still below the 15\% of the whole jump length. 
These results are particularly relevant for the \textit{MV} test, where gravity compensation was computed using the default mass value, resulting in a significant deviation from real conditions, showing that leveraging a low-level controllee, our method exhibits good robustness to parameter uncertainty without the need to add \textit{domain randomization} during the training. 
Similar results are observed on the orientation error that remains between 1.5 and 4 degrees in the case of \textit{DIAG} jump (T4 and T5). 
Equivalent outcomes were obtained with the Aliengo robot (for which the policy was specifically retrained) with a maximum landing error for a \textit{FWD} jump of 4.4 cm in the case of \textit{DV} (T8) with respect to 1 cm of the \textit{NOM} (T7). 
\subsection{Joint Feasibility of the Learned Trajectory}
In this Section we want to assess  both the quality of the  tracking of the
generated trajectory and their physical feasibility considering 
constraints on joint position, velocity and torque.   
Fig. \ref{fig:sim_tracking} presents the desired and actual trajectories for \gls{com} position, velocity, yaw, and yaw rate during a composite \textit{DIAG} jump with the Go1 robot in a simulation with Locosim, being the \textit{DIAG} jump  a good template to evaluate tracking performance because it involves motion along multiple directions.  
The tracking is good during the whole thrusting phase which is highlighted by the shaded area. Accurate tracking of joint positions and velocities further demonstrates the controller’s ability to maintain stability throughout the jump.
Additionally, in the accompanying video (figure not included here due to space constraints) we assess whether any kinematic, velocity, or torque limit is exceeded. As shown, the system consistently operates within all joint limits during the thrust phase indicating that, even under demanding conditions, the policy respects the robot’s physical constraints. 
This physical compliance results in a conservative behavior, 
particularly evident in longer-distance jumps, where the policy prioritizes safety (c.f Section \ref{sec:socomparison}).
Overall, the results confirm the method’s physical feasibility, whose success is 
largely due to the integration of domain knowledge into the reward design. 
%
\begin{figure}[th!]
\centering
    \includegraphics[width=1.0\columnwidth]{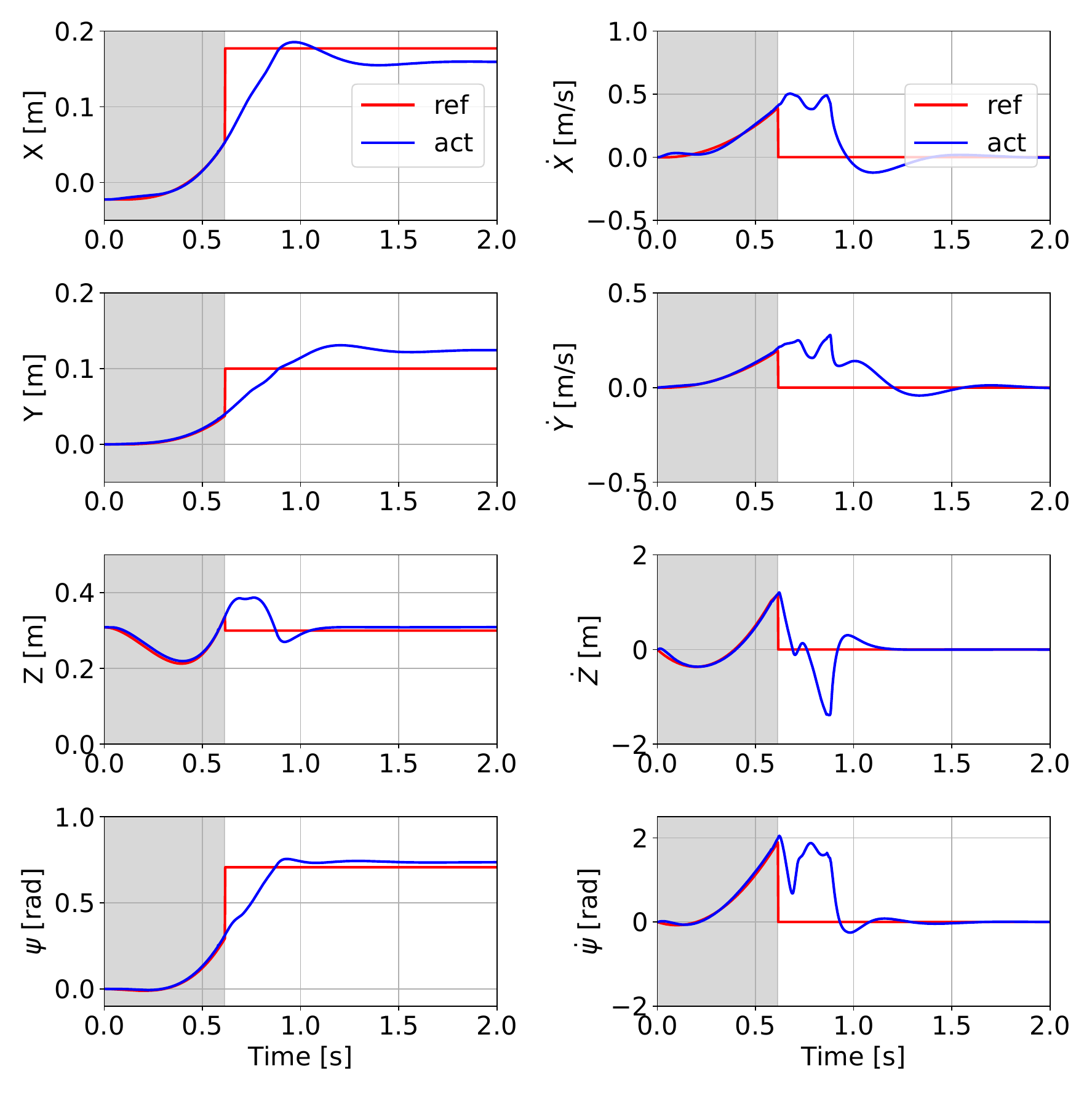}
    \caption{ \textit{Simulation}. Tracking performances for the \textit{DIAG} jump with the Go1 robot: \gls{com} position tracking (upper-left plots) and orientation (yaw) tracking (lower-left plot). (right) tracking of  \gls{com} velocity (upper-right plots) and of the yaw rate (lower-right plot). References are in red and are switched to target values after lift-off, the actual measurements are in blue. The shaded area highlights the thrust phase.}
    \label{fig:sim_tracking}
\end{figure}

\subsection{Comparison with SOA Approach}
\label{sec:socomparison}
In this section we compare our method with the state-of-the-art \gls{e2e} approach developed by Atanassov et al. \cite{atanassov2024curriculumbased}, which  is currently regarded as the benchmark for RL-based jumping, offering a valuable reference for evaluating our performance. Briefly, their method involves dividing the learning process into three curriculum stages, with the same policy, represented by a neural network, being refined sequentially. The first stage focuses on in-place jumps with variable positive height displacements. The second stage introduces forward and lateral displacement jumps, while the final stage involves obstacle-aware jumps, where the robot learns to jump onto a given obstacle.
\begin{figure}[th!]
\centering
    \includegraphics[width=0.52\columnwidth]{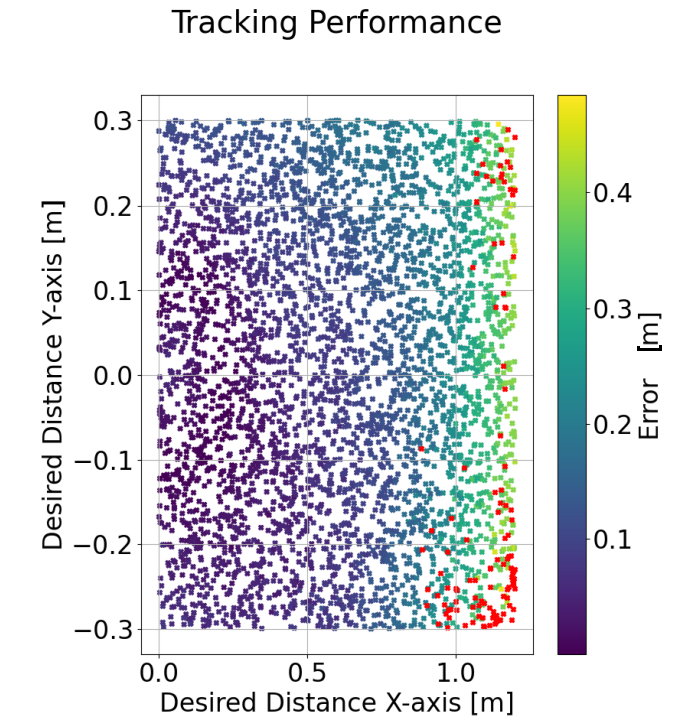}
    \includegraphics[width=0.45\columnwidth]{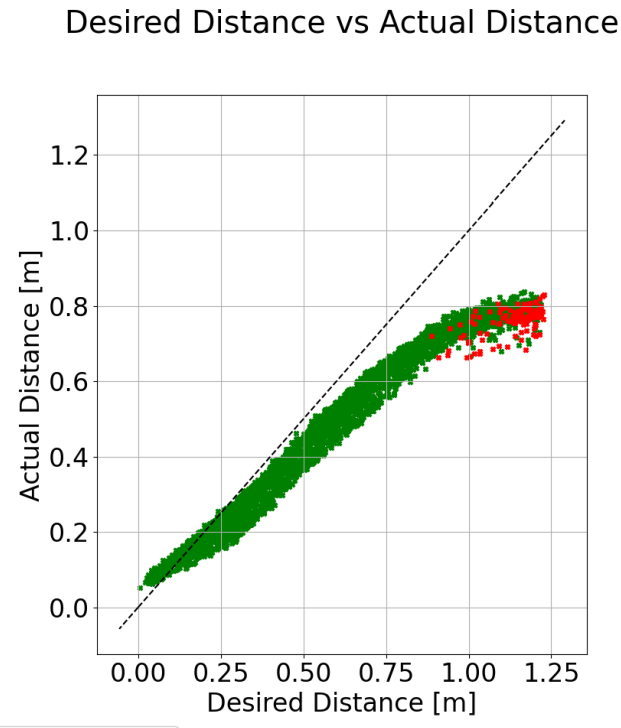}
    \caption{Tracking performances reported in Atassanov et al. \cite{atanassov2024curriculumbased}, with a policy trained for solely forward jumps.   Unsuccessful jumps are depicted in red.}
    \label{fig:soa}
\end{figure}
\begin{figure}[htbp]
\centering
    \includegraphics[width=1.0\columnwidth]{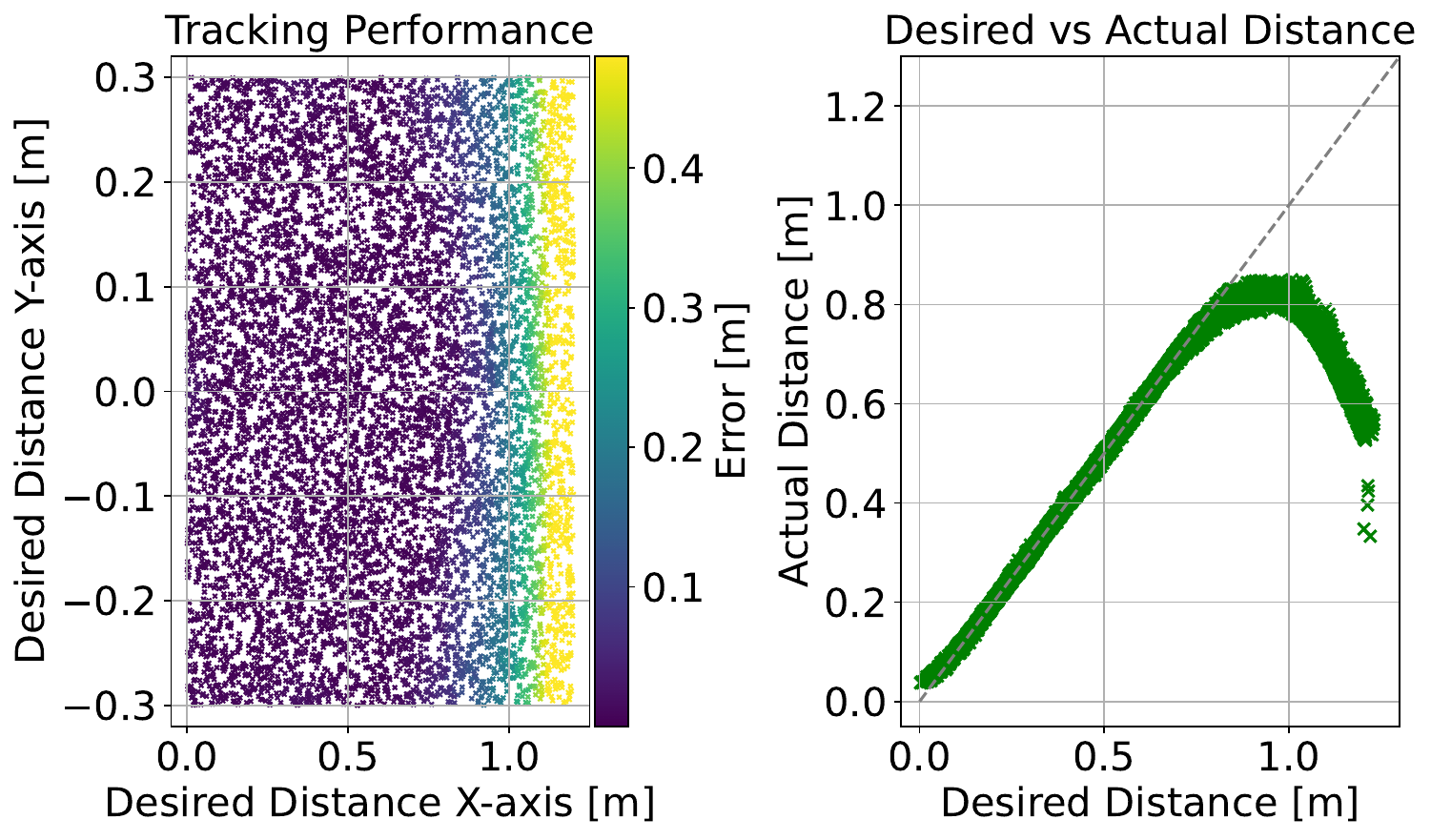}
    \caption{Tracking performances for the Go1 robot reported for our approach with a policy trained for solely forward jumps. }
    \label{fig:our-soa}
\end{figure}
The work was developed using an early version of Isaac-Gym in combination with a predecessor framework of Orbit, and the total training time took approximately 10 hours. Like our work, their training utilized parallelization with 4096 simulated robots running simultaneously. However, in their case, each policy update was computed after 24 environment steps, resulting in a higher data requirement compared to our method. The entire training process required 23000 iterations, with 3000 iterations for the first stage and 10000 for each of the remaining two stages. This highlights the high data requirements of \gls{e2e} approaches compared to our more sample-efficient \gls{grl} method that converges after only 2000 training steps (Fig. \ref{fig:rew-pen}).
In Fig. \ref{fig:soa}, we present the results of their approach for a policy trained specifically for forward jumps with the Go1 robot, while in Fig. \ref{fig:our-soa}, we show our results with our policy using a similar form factor for equal comparison. For consistency, we retrained our policy for forward jumps by utilizing their training flat region composed of an X-range of (0-1.2) meters and a Y-range of (-0.3, 0.3) meters. 
A key observation is that, in their approach, distant jumps combined with lateral 
displacements often result in failed jumps due to improper contacts that are not made by the feet. 
In contrast, our approach avoids such failures by ensuring the respect of system constraints also in the task space,
featuring the same maximum jump distance compared to their method but with a more \textit{uniform}
performance within the previously defined feasible region, as evidenced by the proximity of our jump distributions to the ideal jumping behavior (diagonal line).
Although their plot does not explicitly report jump heights, in the paper they declare
that their method was tested with jump heights of only 0.05 and 0.1 meters, whereas our method demonstrated the ability to reach up to 0.26 meters for forward jumps. This highlights our method’s superior capability for handling higher vertical displacements. 
Note that with respect to the omnidirectional policy, where the maximum accurate landing was limited to 0.6 m, with the forward policy we are able to attain longer jumps up to 0.8 m.

To summarize, our method outperforms Atanassov’s in several aspects: 
(1) it delivers higher accuracy, as seen in the left plot, where the region with less than 10 cm error extends uniformly along the Y-axis up to 80 cm jumps; (2) this accuracy is also reflected in the right plot, where samples are more tightly distributed around the ideal line; (3) it enables higher elevations; (4) the jump trajectory can be verified before the execution. Finally, unlike the omni-directional jump policy, our method produces no failures (i.e., no red samples).
Our method adopts a more conservative strategy that prioritizes safety and accuracy, with the added advantage of requiring significantly fewer samples and avoiding complex multi-stage reward engineering. This makes our approach not only more data-efficient but also more straightforward to implement and train, without sacrificing real-world feasibility and performance. This is supported by the evidence that their approach incorporates 25 distinct reward terms tailored to each curriculum stage. In contrast, our approach uses only 15 reward terms and a single-stage learning.
\subsection{Zero-Shot on Different Robotic Platform}
To demonstrate the generalization of our approach, we selected a quadruped platform similar to the Go1 and performed a zero-shot jump with this platform, making only the necessary low-level control adjustments. The selected platform is the Unitree Aliengo, which weighs approximately 24 kg and features longer leg links with higher torque limits compared to the Go1. As shown in the accompanying video,  we tested on Aliengo a forward jump of 0.6 m employing the policy trained on Go1, achieving a landing error of 0.1 m while satisfying all system constraints related to the robot's characteristics. 
This underscores the potential for directly applying the trained policy to robotic systems with similar specifications, or alternatively, for rapidly fine-tuning the pre-trained policy (without retraining from scratch) to adapt to differences in actuation, mass, and dimensional characteristics across platforms. By defining both the action space and observation space in task-space, our approach contributes to making the policy largely independent of the specific robotic platform. 
%
%
\subsection{Experiments with the real quadruped}
\label{sec:real_exps}
In this section, we report real experiments of the \textit{FWD}, \textit{BWD} and \textit{DIAG} jumps performed both with the Aliengo and Go1 robots. \textit{BWD} jump is a 40 cm jump backwards. 
The experiments were repeated 10 times for each jump type 
and are shown in the accompanying video. 
The estimation of the \gls{com} position is implemented by leg odometry assuming that
the feet remain in stance during the whole thrust while the orientation comes from \gls{imu} readings. 
We measured the landing location with a measurement tape because the odometry would provide non-meaningful estimates after lift-off, and a Motion Capture System was not available. 
Because of the short duration of the jump and the high acceleration involved, it was impossible to estimate the \textit{apex} and \textit{touchdown} conditions  (as customary done in robotics)  via contact force estimation or contact switches. In the first case, it was difficult to set a fixed threshold on the ground reaction forces to assess the contact condition because the high joint acceleration due to the leg retraction created fictitious force estimation that triggered early touchdown when the robot was still in the air. In the second case, the contact sensor in the Unitree robot was unacceptably slow ($\approx$100ms) in detecting the  lift-off, 
because it is based on air pressure sensors, which are inherently subject to pressure dynamics. 

The key to achieve successful experiments was leveraging our physically 
informed action parametrization to \textit{heuristically} estimate the apex and touchdown instants 
by using ballistic equations (see Section \ref{sec:physical_check}).
We report in Table \ref{tab:exps_statistics} the statistics (mean and std. deviation) of the position error (in the x direction)   and orientation errors (in yaw direction) for the three types of jumps (performing 10 experiments for each),  with the Aliengo robot. 
We report also the \textit{FWD} jump with the Go1 robot, for comparison. 
\ifdefined\springer
	\begin{table}[th!]
	\centering
	\caption{Statistics of accuracy in real experiments}
	\begin{tabular}{c| c|c | c | c | c  } 
	    \hline
	    \textbf{ID} & \textbf{Robot} & \textbf{Jump Type}   & $\mathbf{e_x} [m]$ & $\mathbf{e_y}[m]$ & $\mathbf{e_{\psi}}[rad]$\\
	    \hline\hline    
	    T1 &Aliengo & FWD      &  0.046 $\pm$ 0.01  &  N/A & N/A \\
	    T2 &Aliengo & BWD      &  0.071  $\pm$ 0.015&  N/A       & N/A \\
	    T3 &Aliengo & DIAG     &  -0.057  $\pm$ 0.017 &  -0.16  $\pm$ 0.026  & -12.17 $\pm$ 3.1 \\
	    T4 &Go1     & FWD      &  0.01  $\pm$ 0.017 &  N/A & N/A \\
	    \hline\hline
	\end{tabular}
	\label{tab:exps_statistics}
	\end{table}
\else
	\begin{table}[th!]
		\centering
		\caption{Statistics of accuracy in real experiments}
		\resizebox{\columnwidth}{!} {
			\begin{tabular}{c| c|c | c | c | c  } 
				\hline
				\textbf{ID} & \textbf{Robot} & \textbf{Jump Type}   & $\mathbf{e_x} [m]$ & $\mathbf{e_y}[m]$ & $\mathbf{e_{\psi}}[rad]$\\
				\hline\hline    
				T1 &Aliengo & FWD      &  0.046 $\pm$ 0.01  &  N/A & N/A \\
				T2 &Aliengo & BWD      &  0.071  $\pm$ 0.015&  N/A       & N/A \\
				T3 &Aliengo & DIAG     &  -0.057  $\pm$ 0.017 &  -0.16  $\pm$ 0.026  & -12.17 $\pm$ 3.1 \\
				T4 &Go1     & FWD      &  0.01  $\pm$ 0.017 &  N/A & N/A \\
				\hline\hline
		\end{tabular}}
		\label{tab:exps_statistics}
	\end{table}	
\fi
The mean error  for the \textit{FWD} jump with the Go1 robot is quite close to the simulation value  (T4) (c.f. Table \ref{tab:statistics}) 
but more than three times higher for the Aliengo robot (T1) with the robot undershooting the target. 
Conversely, by analyzing the error direction in \textit{BWD} jumps, the Aliengo robot tends to overshoot the target.
Regarding the composite \textit{DIAG} jump, the robot overshoots the target too in the 3 directions, resulting, for orientation, 
in a mean error of around -12 deg, which is  26\% of the target 45-degree orientation.  
Despite the fact that the error offsets are not negligible, the reader should note that their standard deviation is low, which showcases the high reliability of the approach in terms of high
repeatability. We believe the source of these errors is due to friction which has not been properly compensated.  The bad tracking on the $y$ direction is responsible for the 16 cm average error that we obtain in that direction. Training an actuator \gls{nn} \cite{hwangbo2019learning} can be effective in removing these non-idealitites from the low-level controller.   
We also showcase the feasibility of the generated reference trajectories checking the tracking performances in Fig. \ref{fig:real_tracking} for the \textit{DIAG} jump. Solely the thrust phase is reported because the odometry would provide non-meaningful results after lift-off. We can observe that the tracking is reasonable at the position level (left plot) and good at the velocity level (right plots), both for \gls{com} and base orientation. Being the lift-off velocity crucial for most of the landing accuracy.
\begin{figure}[th!]
\centering
    \includegraphics[width=1.0\columnwidth]{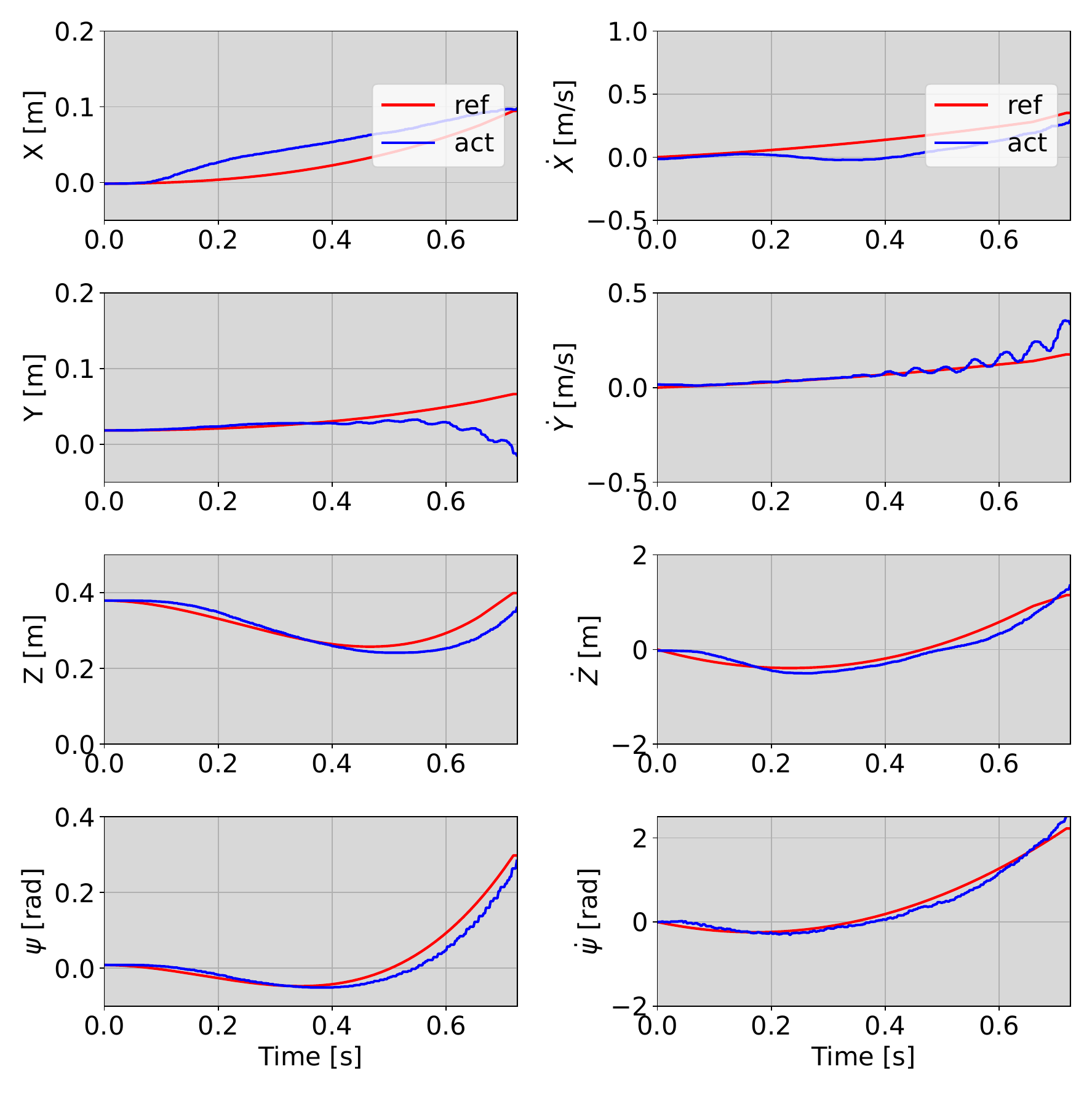}
    \caption{\textit{Experiments}. Tracking performances during the \textit{thrust} phase, for the \textit{DIAG} jump with the Aliengo robot: \gls{com} position tracking (first 3 plots-left) and orientation (yaw) tracking (lower-left plot). The \gls{com} velocity is shown in the upper-right plots whilst the yaw rate in the lower-right plot. References are in red while actual measurements are in blue. }
    \label{fig:real_tracking}
\end{figure}
%
%
%
%
\section{Conclusions}
\label{sec:conclusion}


%


In this work, we propose a novel approach for generating jump
manoeuvres in legged robots. The methodology centres on the notion of
Guided Reinforcement Learning, which we interpret as the injection of
physical intuition into the learning process.

In particular, we use a geometric parametrisation of the curve
followed by the centre of mass during the thrust phase, and
concentrate the learning process into a small set of
parameters—achieving greater sample-efficiency and reduced computation
time compared to customary \gls{e2e} approaches. We also employ classical control to track the generated trajectory at
the joint level, resulting in a good degree of robustness to parameter
uncertainty. Orientation is managed via a separate trajectory, a
solution that enables complex and dynamic manoeuvres such as in-place
twist jumps for rapid changes in heading. The learned policy shows a remarkable ability to handle jumps of
varying heights—including downward jumps—and to perform pure yaw
rotations. The generated trajectories successfully manage orientation
changes while consistently respecting the system’s physical
constraints.


Contrary to previous approaches~\cite{feng} in which morphology and
control strategy are tightly bound, we have achieved a good degree of
platform independence. A crucial choice to obtain this result was
excluding joint-level information from observations and formulating
actions directly in task space, which allowed us to the manage
the differences between the platforms by simply using inverse
kinematics.  As a direct validation of this idea, we were able to
demonstrate zero-shot transfer of the policy learned for the Go1
robot to the Aliengo platform.




%
%

Compared to state-of-the-art end-to-end reinforcement learning
approaches~\cite{atanassov2024curriculumbased}, our method achieves
comparable performance on jump tasks while requiring significantly
less training—only 2,000 episodes versus millions—marking a
substantial improvement in training efficiency.  This gain in
efficiency does not come at the expense of performance: both the
accuracy of the jumps and the achieved elevation outperform the
baseline~\cite{atanassov2024curriculumbased}.

In spite of the evident success of the application of \gls{grl} to the challenging task of managing robot jumps, several limitations remain that will require future research efforts. Some of the most important are the following:

\begin{enumerate}
    \item The current method is purely kinematic. We believe that controlling the dynamics in torque-controlled systems would produce significant performance improvements.

    \item The policy has been trained assuming zero initial velocity, which restricts its application in dynamic scenarios such as multiple jumping tasks. Generalising to arbitrary initial velocities will be an obvious research direction for future work.

    \item We are seeking to extend the state representation to include the position of the feet relative to the base frame. This improvement would enable operations in adverse environments, such as sloped surfaces.

    \item Including a $\Delta \tau$-network, trained via an \gls{e2e} approach, could allow us to adjust the joint configuration at each time step and support airborne reorientation, helping to mitigate the impact during the landing phase.

    \item We plan to increase the sample efficiency and performance of our method even further by considering data augmentation techniques that exploit the symmetries of the quadruped robot and the task \cite{ordonez2025morphosymm, suhuang2024leveraging}.

    \item Considering training scenarios where the robot rears on two legs could lead to more dynamic pre-jump strategies and expand the action repertoire. Obvious benefits in terms of robustness and generality could be achieved by including deformable terrain—such as trampolines or soft ground—within the training environment.

    \item Finally, we aim to investigate the integration of reinforcement learning with numerical optimisation techniques, to combine the flexibility of data-driven methods with the precision and interpretability of model-based approaches.
\end{enumerate}

\section*{Funding Declaration}
This research received no specific grant from any funding agency in the public, commercial, or not-for-profit sectors.
\begin{appendices}
\section{Derivative of a B\'ezier function}
\label{sec:bez_der}
\begin{lemma}
The derivative of a Bézier curve $\mathbf{B}(t) =\sum_{i=0}^{n}b_{i}^n(t)\mathbf{P}_{i}$ of degree $n$ is a  Bézier curve of order $n - 1$ of the form \cite{bernstein_derivative_proof}: 
\begin{equation}
    \label{eq:bez-der2}
    \mathbf{\dot{B}}(t) = \sum_{i=0}^{n-1}b_{i}^{n-1}(t)\mathbf{P^{\prime}}_{i} \;\;\;\; 0 \leq t \leq 1
\end{equation}
with control points defined as
\begin{equation}
    \mathbf{P^{\prime}}_{i} = n\left(\mathbf{P}_{i+1}-\mathbf{P}_{i}\right)
\end{equation}
\end{lemma}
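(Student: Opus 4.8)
The plan is to differentiate $\mathbf{B}(t) = \sum_{i=0}^{n} b_i^n(t)\mathbf{P}_i$ term by term, so everything reduces to computing the derivative of a single Bernstein basis polynomial $b_i^n(t) = \binom{n}{i} t^i (1-t)^{n-i}$. First I would apply the product rule to obtain
\begin{equation}
\frac{d}{dt} b_i^n(t) = \binom{n}{i}\left[ i\, t^{i-1}(1-t)^{n-i} - (n-i)\, t^i (1-t)^{n-i-1}\right].
\end{equation}

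The key algebraic step is to invoke the two binomial identities $i\binom{n}{i} = n\binom{n-1}{i-1}$ and $(n-i)\binom{n}{i} = n\binom{n-1}{i}$. Substituting them recasts each term as a Bernstein polynomial of degree $n-1$, collapsing the expression into the compact difference $\frac{d}{dt} b_i^n(t) = n\bigl[ b_{i-1}^{n-1}(t) - b_i^{n-1}(t)\bigr]$, where I adopt the standard convention that out-of-range polynomials vanish, i.e. $b_{-1}^{n-1} \equiv 0$ and $b_n^{n-1} \equiv 0$.

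Next I would insert this into $\dot{\mathbf{B}}(t) = \sum_{i=0}^n \tfrac{d}{dt}b_i^n(t)\,\mathbf{P}_i$ and split it into two sums. Shifting the index in the term carrying $b_{i-1}^{n-1}$ (setting $j = i-1$) and dropping the vanishing boundary contributions aligns both sums over the common range $0 \le i \le n-1$. Collecting the coefficient of each $b_i^{n-1}(t)$ then yields $\dot{\mathbf{B}}(t) = \sum_{i=0}^{n-1} b_i^{n-1}(t)\, n(\mathbf{P}_{i+1} - \mathbf{P}_i)$, which is precisely a Bézier curve of degree $n-1$ with control points $\mathbf{P}'_i = n(\mathbf{P}_{i+1} - \mathbf{P}_i)$, as claimed.

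I expect the main obstacle to be the bookkeeping in the reindexing step rather than any deep difficulty: one must verify that the boundary terms at $i=0$ (from the shifted sum) and $i=n$ (from the unshifted sum) genuinely drop out via the vanishing-Bernstein convention, so that the two sums collapse onto the same index set with no leftover terms. The binomial-identity manipulation is routine once the identities are stated, so the telescoping reindexing is where an off-by-one error would most easily arise and where I would take the most care.
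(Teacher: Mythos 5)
Your proposal is correct and follows essentially the same route as the paper's proof: both rely on the Bernstein derivative identity $\dot{b}_i^n(t) = n\bigl[b_{i-1}^{n-1}(t)-b_i^{n-1}(t)\bigr]$, the vanishing convention $b_{-1}^{n-1}=b_n^{n-1}=0$, and the same split-and-reindex ($j=i-1$) collection of terms. If anything, you supply slightly more detail than the paper, which asserts the Bernstein identity without proof while you derive it via the product rule and the binomial identities $i\binom{n}{i}=n\binom{n-1}{i-1}$ and $(n-i)\binom{n}{i}=n\binom{n-1}{i}$.
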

\begin{proof}
To calculate the derivative of $\mathbf{B}(t)$, we begin by noting that since the control points $\mathbf{P}_{i}$ are constant and independent of time $t$, the calculation simplifies to differentiating the Bernstein basis polynomials, 
\begin{equation}
    \dot{\vect{B}}(t) =\sum_{i=0}^{n} \mathbf{P}_{i}\frac{\mathrm{d}b_{i}^n}{\mathrm{d}t}(t)
\end{equation}
where it can be proved that the derivative can be written as a combination of two polynomials of lower degree:
\begin{equation}
    \frac{\mathrm{d}b_{i}^n}{\mathrm{d}t}(t) = \dot{b}_{i}^n(t) = n\left(b_{i-1}^{n-1}(t)-b_{i}^{n-1}(t)\right)
\end{equation}
Hence, we can then compute the derivative of the curve obtaining:
\begin{equation}
     \mathbf{\dot{B}}(t) = \sum_{i=0}^{n} \mathbf{P}_{i} n \left[b_{i-1}^{n-1}(t) - b_{i}^{n-1}(t)\right] 
\end{equation}
With the convention $b_{-1}^{n-1}=b_{n}^{n-1}=0$ that comes from the fact the binomial coefficients in the Bernstein Polynomials vanish outside their natural range.  Splitting and regrouping the sums we get
\begin{equation}
     \mathbf{\dot{B}}(t) = n\left[ \sum_{i=0}^{n} \mathbf{P}_{i}   b_{i-1}^{n-1}(t) 
                                - \sum_{i=0}^{n} \mathbf{P}_{i} b_{i}^{n-1}(t)\right] 
\end{equation}
Now shifting the index in the first sum (i.e. let $j=i - 1$) and considering $b_{-1}^{n-1}=0$:
\small{\begin{equation}
 \sum_{i=0}^{n} \mathbf{P}_{i}  b_{i-1}^{n-1}(t) = 
 \sum_{j=-1}^{n-1} \mathbf{P}_{j+1} b_{j}^{n-1}(t) = 
 \sum_{j=0}^{n-1} \mathbf{P}_{j+1} b_{j}^{n-1}(t) 
\end{equation}}
a similar argument can be followed for the second term in the sum using $b_{n}^{n-1}=0$. Then, putting all together:
\begin{equation}
\begin{split}
        \mathbf{\dot{B}}(t) &= n\left[ \sum_{i=0}^{n-1} \mathbf{P}_{i+1}  b_{i}^{n-1}(t) 
                                - \sum_{i=0}^{n-1} \mathbf{P}_{i} b_{i}^{n-1}(t)\right]  \\
                            & = n \sum_{i=0}^{n-1} \left(\mathbf{P}_{i+1}-\mathbf{P}_{i}\right) b_{i}^{n-1} 
\end{split}
\label{eq:bez_derivative}
\end{equation}
\end{proof}
\section{Explicit form  of a cubic  B\'ezier curve}
\label{sec:explicit}
For clarity in notation, we will define the normalized time $\hat{t}$ as follows:
\begin{equation}
    \hat{t}= \frac{t}{T_{th}} \;\;\;\; 0 \leq t \leq T_{th} 
\end{equation}
The explicit forms for both the cubic Bézier curve and its 
corresponding quadratic derivative (computed from \eqref{eq:bez_derivative}) are as follows:
\small{
\begin{align}
\begin{split}
    \mathbf{B}(t) &= (1-\hat{t})^{3}\mathbf{P}_{0}+3(1-\hat{t})^{2}\hat{t}\mathbf{P}_{1}+3(1-\hat{t})\hat{t}^{2}\mathbf{P}_{2}+\hat{t}^{3}\mathbf{P}_{2}\\
    \mathbf{\dot{B}}(t)&= \frac{3}{T_{th}}(1-\hat{t})^{2}(\mathbf{P}_{1}-\mathbf{P}_{0})+\frac{6}{T_{th}}(1-\hat{t})\hat{t}(\mathbf{P}_{2}-\mathbf{P}_{1})
    \\ &~~~+\frac{3}{T_{th}}\hat{t}^{2}(\mathbf{P}_{3}-\mathbf{P}_{2}) \\
        & = (1-\hat{t})^{2}\mathbf{P^{\prime}}_{0}+2(1-\hat{t})\hat{t}\mathbf{P^{\prime}}_{1}+\hat{t}^{2}\mathbf{P^{\prime}}_{2}
\end{split}
\label{eq:expl-bez}
\end{align}
}
\end{appendices}
\small

\ifdefined\springer
\else
	\bibliographystyle{style/IEEEtran} 
\fi

\bibliography{references/references}
\end{document}